\newtheorem{lemma}{Lemma}
\newtheorem*{lemma*}{Lemma}
\newtheorem{corollary}{Corollary}
\theoremstyle{definition}     
\newtheorem{definition}{Definition}
\newtheorem{example}{Example}
\newtheorem{remark}{Remark}
\newcommand{\chain}[1][n]{\langle #1\rangle}
\colorlet{myred}{red!80!black}
\colorlet{myblue}{blue!80!black}
\colorlet{mygreen}{green!60!black}
\colorlet{myorange}{orange!70!red!60!black}
\colorlet{mydarkred}{red!30!black}
\colorlet{mydarkblue}{blue!40!black}
\colorlet{mydarkgreen}{green!30!black}
\tikzset{
  >=latex, 
  node/.style={thick,circle,draw=myblue,minimum size=22,inner sep=0.5,outer sep=0.6},
  node in/.style={node,green!20!black,draw=mygreen!30!black,fill=mygreen!25},
  node hidden/.style={node,blue!20!black,draw=myblue!30!black,fill=myblue!20},
  node convol/.style={node,orange!20!black,draw=myorange!30!black,fill=myorange!20},
  node out/.style={node,red!20!black,draw=myred!30!black,fill=myred!20},
  connect/.style={thick,mydarkblue}, 
  connect arrow/.style={-{Latex[length=4,width=3.5]},thick,mydarkblue,shorten <=0.5,shorten >=1},
  node 1/.style={node in}, 
  node 2/.style={node hidden},
  node 3/.style={node out}
}
\def\nstyle{int(\lay<\Nnodlen?min(2,\lay):3)} 
\title{Order Theory in the Context of Machine Learning}
\author[1,*,{$\star$}]{Eric Dolores-Cuenca}
\author[2,{$\star$}]{Aldo Guzm\'an-S\'aenz}
\author[3,4,**]{Sangil Kim}
\author[1,3,5,{$\star$}]{Susana L\'opez-Moreno}
\author[6,7,***]{Jose Mendoza-Cortes}
\affil[1]{Industrial Mathematics Center, Pusan National University, South Korea}
\affil[2]{IBM Research, T.J. Watson Research Center, Yorktown Heights, USA}
\affil[3]{Department of Mathematics, Pusan National University, South Korea}
\affil[4]{Institute for Future Earth, Pusan National University, South Korea}
\affil[5]{Humanoid Olfactory Display Center, Pusan National University, South Korea}
\affil[6]{Department of Chemical Engineering \& Materials Science, East Lansing, Michigan State University, USA}
\affil[7]{Department of Physics and Astronomy, East Lansing, Michigan State University, USA}
\affil[*]{Email: eric.rubiel@pusan.ac.kr}
\affil[**]{Email: sangil.kim@pusan.ac.kr}
\affil[***]{Email: jmendoza@msu.edu}
\date{\today}
\begin{document}


\maketitle

\begin{abstract}
The paper ``Tropical Geometry of Deep Neural Networks'' by L. Zhang et al. introduces an equivalence between integer-valued neural networks (IVNN) with $\text{ReLU}_{t}$ and tropical rational functions, which come with a map to polytopes. Here, IVNN refers to a network with integer weights but real biases, and $\text{ReLU}_{t}$ is defined as $\text{ReLU}_{t}(x)=\max(x,t)$ for $t\in\mathbb{R}\cup\{-\infty\}$. 

For every poset with $n$ points, there exists a corresponding order polytope, i.e., a convex polytope in the unit cube $[0,1]^n$ whose coordinates obey the inequalities of the poset. We study neural networks whose associated polytope is an order polytope.
We then explain how posets with four points induce neural networks that can be interpreted as $2\times 2$ convolutional filters. These poset filters can be added to any neural network, not only IVNN.

Similarly to maxout, poset pooling filters update the weights of the neural network during backpropagation with more precision than average pooling, max pooling, or mixed pooling, without the need to train extra parameters. We report experiments that support our statements.

We also define the structure of algebra over the operad of posets on poset neural networks and tropical polynomials. This formalism allows us to study the composition of poset neural network arquitectures and the effect on their corresponding Newton polytopes, via the introduction of the generalization of two operations on polytopes: the Minkowski sum and the convex envelope.
\end{abstract}

\clearpage
\section{Introduction}

The paper by~\cite{TG} on tropical geometry of deep neural networks explains an equivalence between integer-valued neural networks (IVNN) with $\text{ReLU}_{t}$ activation (where $\text{ReLU}_{t}(x):=\text{ReLU}(x,t)=\max(x,t)$) and tropical rational functions, which come with a map of polytopes. This map associates a ``simplified tropical polynomial'' (sum of tropical monomials) with a convex polytope, sending $\sum c_i x^{\alpha_i}$ to the convex envelope of $(\alpha_1,c_1),\dots,(\alpha_n,c_n)$, where $\alpha_i$ are vectors and the sum is finite. \footnote{The code associated to this paper is available at \url{https://github.com/mendozacortesgroup/Poset-filters}.}

Order theory, as seen in books such as~\cite{orderbook, Corderbook}, is a discipline in pure mathematics that formalizes the study of structures with an order. 

\begin{remark}
    For instance, a partially ordered set, also called a poset, is a set in which we can compare some elements, but perhaps not all of them. 
\end{remark}

\begin{example}
Let $\{A,B,C,D\}$ be a list of computer programs, where program $B$ needs the output of program $A$, program $C$ needs the output of program $B$, and program $D$ can run in parallel to all. If we specify $A<B$, $B<C$ and extend the relation $<$ by transitivity, we obtain the poset $\{A,B,C,D \ | \ A<B<C\}$. Any linear order of $\{A,B,C,D \ | \ A<B<C\}$ compatible with the original order (for example $A<D<B<C$) determines a way to run the programs one at a time. 
\end{example}

\begin{definition}
Given a poset $P$ with $|P|=n$ points, \cite{TwoPP} defined the \textnormal{order polytope} of $P$ as the polytope contained in the unit cube $[0,1]^n$ whose coordinates satisfy the inequalities of the poset $P$. 
\end{definition}

\begin{example}
Then, we see, for example, that $\mathrm{Poly}(x<y)=\{(x,y)\in [0,1]^2 \ | \ 0\leq x\leq y\leq 1\}$, that $\mathrm{Poly}(x_1<\cdots<x_n)$ is the $n$-simplex $\Delta[n]$ and that the polytope of the poset in Example $1$ is $\mathrm{Poly}(\{A,B,C,D \ | \ A<B<C\})=\Delta[3]\times [0,1]$.
\end{example}

In this paper, we describe tropical polynomials associated with order polytopes. Expanding on the work of~\cite{TG}, we define \textit{poset neural networks} as neural networks whose tropical polynomials are associated with an order polytope.

 Although poset neural networks are IVNN, when interpreting them as convolutional filters they can be included in arbitrary convolutional neural networks. As convolutional filters, the following are two examples of the functions that we discovered: 
 \begin{itemize}
     \item The first filter
sends the $2\times 2$ square matrix input $\begin{bmatrix}
   a_{0,0} & a_{0,1} \\
   a_{1,0} & a_{1,1} \\
  \end{bmatrix}$
to the following: 
\begin{equation}
  \max\{0,\ a_{0,0},\ a_{0,0}+a_{0,1},\ a_{0,0}+a_{0,1}+a_{1,0},\ a_{0,0}+a_{0,1}+a_{1,0}+a_{1,1}\}  \label{eqn:simplex}
\end{equation}
 \item The second
filter sends the same input to: 
\begin{equation}
\max\left\{
\begin{aligned}
   & \ \ \ \ 0, \\
   & \ \ \ \max_{i,j}\{a_{i,j}\}, \\
   & \max_{\substack{i,j,k,l\\(i,j)\neq (k,l)} }\{a_{i,j}+a_{k,l}\}, \\
   & \max_{\substack{i,j,k,l,m,n\\
(i,j)\neq (k,l)\\(i,j)\neq (m,n)\\(k,l)\neq (m,n)
}}\{a_{i,j}+a_{k,l}+a_{m,n}\}, \\
& \ \ \ \ a_{0,0}+a_{1,0}+a_{0,1}+a_{1,1}\\
\end{aligned} 
\right\}.
\label{eqn:cube}
\end{equation}

\end{itemize}

We verified that these filters do not appear in previous literature, as discussed by~\cite{reviewpooling, DLBook} or~\cite{updatedreview}. Poset filters are similar to maxout (\cite{MaxOut}), but they differ in that poset filters have fixed weights. In~\cite{alternate}, tropical convolutional layers are associated with homogeneous tropical monomials of degree one. In contrast, the tropical polynomials that are associated to poset filters are not homogeneous (see Example~\ref{Ex:1}). The work of~\cite{genReLU} explores the use of tropical polynomials as activation functions, while we focus our attention on convolutional filters. 

A question that can arise is why posets are related to convolutional filters. To explain this connection, first we need to introduce some notation. 
\begin{remark}
Polynomials whose $i$-th monomial adds at most one more variable than the previous ones are said to satisfy the staircase property, for instance $x+xz+xyz$. 
\end{remark}

The work of~\cite{staircase,merged} explains that polynomials with the staircase property are easier to learn for certain neural network architectures that use stochastic gradient descent.

On the tropical side, from Definition~\ref{def:exp},
poset filters are associated to tropical sums of tropical polynomials of simplices. The tropical polynomial associated to a simplex satisfies a tropical staircase property: the tropical monomials are of increasing order, and each nonzero monomial adds at most one more variable than the previous monomials (see  Equation \eqref{Eqn:simplex} and, more explicitly, Table~\ref{fig:algebra1} and Table~\ref{fig:algebra2}). 

The tropical staircase property in the context of poset filters means that each term of a filter (which is a linear combination of the inputs) considers one more input than the previous term. Due to the previous property, we decided to work with simplices. Finally, convexity of the union of simplices (sharing a line) is equivalent to working with order polytopes (see Lemma~\ref{Lemma:1}).

This paper is organized as follows. 

- In Section~\ref{F:section} we introduce basic definitions and some necessary theoretical results. 

- Section~\ref{Sec:CFilters} is dedicated to the study of the newly defined convolutional filters. 

- We summarize our experimental results in Section~\ref{Sec:rev}. 

- Section~\ref{Sec:end} includes more detailed theoretical results. It introduces the language of operads and algebras over the operad of posets, as described by~\cite{OP}. Section~\ref{end:poly} reviews the action of posets on polytopes, Section~\ref{end:nn} describes an action of posets in a family of neural networks and Section~\ref{def:troppoly} explains how to define the action of posets on tropical polynomials. The main theoretical results consist of the proof that we can recover a poset from the vertices of an order polytope (Corollary~\ref{Cor:polytope}), the discovery of polynomials that distinguish posets (Lemma~\ref{Lemma:dist}), and the introduction of an action of the operad of posets on convex polytopes and tropical polynomials. 

- Appendices \ref{Sec:dataset} and \ref{Sec:choices} provide information on the datasets used and on the choices made when performing the experiments. 

- Appendices~\ref{Sec:exp}, \ref{Sec:ex3} and \ref{Sec:exp2} contain details about the experiments.
 Appendix~\ref{A:1} studies the corresponding geometric transformations associated with poset filters.

\subsection{Mathematical background}\label{F:section}

\subsubsection{Order theory}
In this section we review some basic notions of order theory.
\begin{definition}
A \textnormal{partially ordered set}, or \textnormal{poset}, $P$ is an ordered pair $P=\{X,\leq\}$ that consists of a set $X$ and a partial order $\leq$.
\end{definition}

\begin{definition}
    A \textnormal{Hasse diagram} is a graph associated with a poset, in which a vertex is connected vertically to each of the successors of that vertex.  
\end{definition}
Hasse diagrams encode all the information required to characterize a poset. There is a certain ambiguity when drawing a Hasse diagram with the labels of the points. We will assume that our Hasse diagrams contain a choice of labels of the points. Usually, $x$ is located at the bottom and the successors of $x$ are drawn above it.
\begin{example}
By the previous definitions, $\pcauset[alt={\hbox{disjoint union of two points}}]{1,2}$ is the Hasse diagram of the poset $\{x<y\}$ and $\pcauset[alt={1-\hbox{chain union }2-\hbox{chain}}]{3,1,2}$ represents the poset $\{x,y,z \ | \ x<y\}$.
\end{example}
\begin{remark}
By the $n$-chain, or $\chain[n]$, we mean the poset $1<2<\cdots<n$.
\end{remark}

\subsubsection{Convexity}
\begin{definition}
Given a poset $P$ with $n$ points, the order polytope $\mathrm{Poly}(P)$ is defined as the object in the $n$-cube $[0,1]^n$ where each coordinate is assigned to a vertex of the poset and where we look for the spaces of points with the restrictions of the poset. Equivalently,
\[\mathrm{Poly}(P)=\{f:P\rightarrow [0,1], \ f(x)\leq f(y)\hbox{ if }x\leq_P y\}.\]
\end{definition}
\begin{example}
Following the definition, $P(\{1<2\})=\{0\leq x\leq y\leq 1\}$ and $P(\{a,b\})=\{0\leq x\leq 1,\ 0\leq y\leq 1\}$. 
\end{example}
\begin{remark}
It follows that $P(\chain[n])$ is the $n$-simplex (in increasing coordinates).
\end{remark}
\begin{example}\label{Ex:1}
    The order polytope of the $\pcauset[alt={x<y>w<z}]{2, 4, 1, 3}$ poset, defined as $\pcauset[alt={x<y>w<z}]{2, 4, 1, 3}=\{w<y>x<z\}$, is the region that satisfies:
\begin{eqnarray}
\mathrm{Poly}(\pcauset[alt={x<y>w<z}]{2, 4, 1, 3})&=&\{0\leq w\leq x\leq y \leq z\leq 1\}\label{eq1_}\\
&\cup&\{0\leq w\leq x\leq z \leq y\leq 1\}\label{eq2_}\\
&\cup&\{0\leq x\leq w\leq y \leq z\leq 1\}\label{eq3_}\\
&\cup&\{0\leq x\leq z\leq w \leq y\leq 1\}\label{eq4_}\\
&\cup&\{0\leq x\leq w\leq z \leq y\leq 1\}.\label{eq5_}
\end{eqnarray}

 We assume $\pcauset[alt={x<y>w<z}]{2, 4, 1, 3}$ has vertices with labels (from left to right) $w,\ y,\ x$, and $z$. There are $5$ ways to give a linear order to the $\pcauset[alt={x<y>w<z}]{2, 4, 1, 3}$ poset that are compatible with the original order, as seen in Figure~\ref{fig:hasseN}. Here, we use height to order the vertices. For instance, in the linearization at the very bottom of Figure~\ref{fig:hasseN}, the order is $x<z<w<y$.  Each one of these linearizations determines a $4$-simplex of the triangulation of the order polytope.

\begin{figure}[htb]
\begin{center}
\begin{tikzpicture}[node distance=2cm, every node/.style={scale=0.9}]
    \tikzstyle{element} = [circle, draw, fill=white, inner sep=2pt]

    \node (top) at (0,3) {
        \begin{tikzpicture}
            \node[element] (a) at (0,0) {$w$};
            \node[element] (b) at (0,1.5) {$y$};
            \node[element] (c) at (0.8,2.2) {$z$};
            \node[element] (d) at (0.8,0.8) {$x$};
            \draw (a) -- (b) -- (d) -- (c);
        \end{tikzpicture}
    };

    \node (left) at (-3,0) {
        \begin{tikzpicture}
            \node[element] (a) at (0,0.8) {$w$};
            \node[element] (b) at (0,1.5) {$y$};
            \node[element] (c) at (0.8,2.2) {$z$};
            \node[element] (d) at (0.8,0) {$x$};
            \draw (a) -- (b) -- (d) -- (c);
        \end{tikzpicture}
    };

    \node (right) at (3,0) {
        \begin{tikzpicture}
            \node[element] (a) at (0,0) {$w$};
            \node[element] (b) at (0,2.2) {$y$};
            \node[element] (c) at (0.8,1.5) {$z$};
            \node[element] (d) at (0.8,0.8) {$x$};
            \draw (a) -- (b) -- (d) -- (c);
        \end{tikzpicture}
    };

    \node (bottom) at (0,-3) {
        \begin{tikzpicture}
            \node[element] (a) at (0,0.8) {$w$};
            \node[element] (b) at (0,2.2) {$y$};
            \node[element] (c) at (0.8,1.5) {$z$};
            \node[element] (d) at (0.8,0) {$x$};
            \draw (a) -- (b) -- (d) -- (c);
        \end{tikzpicture}
    };

    \node (tail) at (0,-6) {
        \begin{tikzpicture}
            \node[element] (a) at (0,1.5) {$w$};
            \node[element] (b) at (0,2.2) {$y$};
            \node[element] (c) at (0.8,0.8) {$z$};
            \node[element] (d) at (0.8,0) {$x$};
            \draw (a) -- (b) -- (d) -- (c);
        \end{tikzpicture}
    };

    \draw (top) -- (left);
    \draw (top) -- (right);
    \draw (left) -- (bottom);
    \draw (right) -- (bottom);

    \draw (bottom) -- (tail);
\end{tikzpicture}
\end{center}
\caption{Different linearizations of the poset $\pcauset[alt={x<y>w<z}]{2, 4, 1, 3}$.  }
\label{fig:hasseN}
\end{figure}

\end{example}

It is known (see for example Section 3.2 of~\cite{SP}) that two $n$-simplices in an order polytope share a $(n-1)$-face if the corresponding linearizations differ only on a pair $x_i<x_j$ and $x_j<x_i$, with the face being the region where $x_i=x_j$. In Figure~\ref{fig:hasseN}, two linearizations are connected if their corresponding simplices share a face.

The order polytope of a poset is a convex set (this follows from Lemma~\ref{L:Convp}). The following lemma relates the study of posets with the study of convex polytopes in $[0,1]^n$. In the proof of this lemma, we assume that any $n$-simplex is written using ``increasing coordinates'' (for example $\Delta[2]=\{(x_1,x_2)\ | \  0\leq x_1\leq x_2\leq 1\}$). 
Then, each $n$-simplex induces a fixed linear order on the set of subscripts $\{1,2,\dots,n\}$.

\begin{remark}
Another possible presentation in increasing coordinates of the $2$-simplex is 
 $\Delta[2]=\{(x_1,x_2)\ | \ 0\leq x_2\leq x_1\leq 1\}$, which induces the order $2<1$ on the set $\{1,2\}$. 
 We need both presentations to divide the square into a union of two 2-simplices.

 \end{remark}

\begin{lemma}\label{Lemma:1}
    Let $C$ be a convex $n$-dimensional subset of the unit $n$-cube. If $C$ is the union of $n$-simplices, all sharing the line from the zero vector to the one vector, then the convex set $C$ is an order polytope.

\end{lemma}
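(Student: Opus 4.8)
The plan is to read a poset off $C$ directly and then show that $C$ is its order polytope. Since each $n$-simplex in the decomposition is written in increasing coordinates, it is the order simplex $\Delta_L=\{x\in[0,1]^n:x_{a_1}\le\cdots\le x_{a_n}\}$ of a unique linear order $L=(a_1,\dots,a_n)$ on $\{1,\dots,n\}$, and every such simplex contains the diagonal $[0,\mathbf 1]$. Thus $C=\bigcup_{L\in\mathcal S}\Delta_L$ for a set $\mathcal S$ of linear orders. I would define a relation on $\{1,\dots,n\}$ by declaring $i\le_P j$ iff $x_i\le x_j$ for every $x\in C$. This relation is reflexive and transitive automatically, and antisymmetric because $C$ is $n$-dimensional (for $i\ne j$ it contains a point with $x_i\ne x_j$), so $P$ is a genuine poset. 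Recalling (Section 3.2 of \cite{SP}) that $\mathrm{Poly}(P)$ is triangulated precisely by the order simplices $\Delta_L$ of the linear extensions $L$ of $P$, the goal $C=\mathrm{Poly}(P)$ becomes the combinatorial identity $\mathcal S=\mathcal L(P)$, where $\mathcal L(P)$ is the set of linear extensions of $P$.

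One inclusion is immediate. If $L\in\mathcal S$ and $i\le_P j$, then $x_i\le x_j$ holds on all of $C\supseteq\Delta_L$; were $j$ to precede $i$ in $L$, the interior of $\Delta_L$ would contain points with $x_j<x_i$, a contradiction. Hence $i$ precedes $j$ in $L$, so $L$ is a linear extension of $P$ and $\mathcal S\subseteq\mathcal L(P)$, which already gives $C\subseteq\mathrm{Poly}(P)$. Since the order simplices tile the cube with pairwise disjoint interiors, it also follows that $\mathcal S=\mathcal L(P)$ is equivalent to $C=\mathrm{Poly}(P)$.

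The substance of the proof is the reverse inclusion $\mathcal L(P)\subseteq\mathcal S$, and this is where convexity of $C$ is essential. I would establish it in two steps. The first is a closure property: if $L\in\mathcal S$ and $L'$ is obtained from $L$ by transposing two adjacent entries $a,b$ while remaining a linear extension of $P$, then $L'\in\mathcal S$. Here $a,b$ are necessarily incomparable in $P$, so by the definition of $P$ there is a point $p\in C$ with $p_a>p_b$. Pick $z$ in the relative interior of the shared facet $\Delta_L\cap\Delta_{L'}=\{x\in\Delta_L:x_a=x_b\}$; then $z\in C$, and by convexity the segment $[z,p]\subseteq C$. Moving a short distance from $z$ towards $p$ breaks only the tie $x_a=x_b$, in favour of $x_a>x_b$, while preserving every strict inequality holding at $z$; such points therefore lie in $\mathrm{int}\,\Delta_{L'}$. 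Since a point of $C$ lying in the open cell $\mathrm{int}\,\Delta_{L'}$ can only be contributed by $\Delta_{L'}$ itself, we conclude $L'\in\mathcal S$.

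The second step is the classical fact that the graph on $\mathcal L(P)$ whose edges are the $P$-admissible adjacent transpositions is connected (provable by a bubble-sort argument: a minimal element of $P$ can always be moved to the front through allowed swaps, after which one recurses). Combining the two steps, $\mathcal S$ is a nonempty subset of $\mathcal L(P)$ closed under the edges of a connected graph on $\mathcal L(P)$, whence $\mathcal S=\mathcal L(P)$ and therefore $C=\mathrm{Poly}(P)$. I expect the closure step to be the main obstacle; its key point is that the pre-existing strict gaps at a relative-interior facet point dominate the small transverse motion supplied by $p$, so that crossing the facet lands one cleanly inside the single adjacent cell $\Delta_{L'}$ rather than in an unrelated simplex.
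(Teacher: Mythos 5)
Your proof is correct, and although it starts from the same place as the paper's, the engine of the argument is genuinely different. Both proofs define $P$ as the intersection of the linear orders read off the simplices of $C$ (your pointwise relation ``$i\le_P j$ iff $x_i\le x_j$ on all of $C$'' is exactly that intersection), both obtain $C\subseteq\mathrm{Poly}(P)$ immediately, and both rest on the triangulation of $\mathrm{Poly}(P)$ by the order simplices of linear extensions from Section 3.2 of the cited work of Stanley. For the hard inclusion $\mathrm{Poly}(P)\subseteq C$, the paper reasons by contradiction on a simplex of $\mathrm{Poly}(P)\setminus C$, splitting into three cases by the amount of boundary it shares with $C$: sharing two or more facets is excluded by convexity; sharing exactly one facet is said to ``introduce a relation $t_i<t_j$ not contained in $C$''---a step that tacitly assumes all of $C$ lies on one side of the hyperplane $t_i=t_j$, which is not justified there; and the low-dimensional-contact case is excluded by a segment argument. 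You argue in the positive direction instead: the set $\mathcal{S}$ of linear orders realized in $C$ is closed under $P$-admissible adjacent transpositions (this is where convexity is spent: the incomparability witness $p$ with $p_a>p_b$, together with the segment from a relative-interior point $z$ of the shared facet, lands you in $\mathrm{int}\,\Delta_{L'}$, forcing $\Delta_{L'}\subseteq C$), and then connectivity of the flip graph on $\mathcal{L}(P)$ finishes. Your closure step is in effect a rigorous repair of the paper's one-facet case---it supplies exactly the justification that the paper's claim is missing, by explaining why no simplex of $C$ can lie strictly on the far side---and the appeal to connectivity of the linear-extension graph replaces the paper's other two cases. What your route costs is one external combinatorial lemma (standard, and you sketch its bubble-sort proof); what it buys is an argument whose steps are all airtight and whose use of convexity is isolated in a single, clearly stated place.
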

\begin{proof}
Every simplex of $C$ represents the order polytope of a linear order in the set $\{1,2,\dots,n\}$ that indexes the orthogonal basis of $\mathbb{R}^n$.  Let the order $P$ be the intersection of these linear orders. By definition, $C$ is contained in $\text{Poly}(P)$. It remains to show that $\text{Poly}(P)\subset C$.

We proceed to prove the lemma by contradiction. Among all the $n$-simplices in $\text{Poly}(P)\setminus{C}$, let $S$ be simplex that shares (at least) two faces ($(n-1)$-simplices) with the corresponding simplices in $C$. Then, by connecting the points on those two faces of $S$ with a line and by the convexity of $C$, we show that $S$ intersects $C$ at more points than the boundary, which is a contradiction.

Now, we know that simplices of $\text{Poly}(P)\setminus C$ cannot share more than one face with a simplex in $C$. Pick any simplex $T$ that shares exactly one face with a simplex in $C$. Then, for some pair of coordinates $t_i, t_j$, the points in the simplex $T$ satisfy $t_i<t_j$, but any other element of $C$ satisfies $t_j\leq t_i$. Thus, the simplex $T$ does not belong to $\text{Poly}(P)$ (as it introduces a relation $t_i<t_j$ not contained in $C$), which is a contradiction.

Finally, it is not possible that for every simplex of $\text{Poly}({P})\setminus C$, their intersection with  $C$ consists on a simplex of dimension at most $n-2$, as we explain below.

Take a point in the center of a simplex $U\in \text{Poly}(P)\setminus{C}$ and a point in the center of a simplex in $C$. The line between these points is included in $\text{Poly}(P)$. This line intersects the boundary of a simplex in $\text{Poly}(P)$ and a simplex in $C$ and, by varying the point in a ball inside of $U$, we find that the intersection is open and thus a whole face is shared by $\text{Poly}(P)$ and $C$.

We conclude that $C=\text{Poly}(P)$.
\end{proof}

\begin{corollary}\label{Cor:polytope}
    There is an assignment $ \mathrm{Poly}(P) \rightarrow P$.
\end{corollary}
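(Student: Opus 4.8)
The plan is to produce a well-defined inverse to the construction $P\mapsto \mathrm{Poly}(P)$, i.e. to reconstruct the poset $P$ intrinsically from the geometry of its order polytope. The natural tool is the canonical triangulation of $\mathrm{Poly}(P)$ cut out by the coordinate-difference hyperplanes $\{x_i=x_j : 1\le i<j\le n\}$. Crucially, this arrangement is defined purely in terms of the ambient cube $[0,1]^n$ and does not presuppose any knowledge of $P$, so whatever we extract from it depends only on the polytope itself.

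First I would intersect $\mathrm{Poly}(P)$ with this arrangement. Since $\mathrm{Poly}(P)$ is cut out by inequalities of the form $x_i\le x_j$, every top-dimensional cell of the resulting subdivision is a chamber of the form $\{0\le x_{\sigma(1)}\le\cdots\le x_{\sigma(n)}\le 1\}$ for some permutation $\sigma$, that is, the order polytope of a linear order on $\{1,\dots,n\}$. Each such simplex contains the segment from $\mathbf{0}$ to $\mathbf{1}$, so the hypotheses of Lemma~\ref{Lemma:1} are met, and the collection of permutations $\sigma$ that actually occur is precisely the set of linear orders refining the defining inequalities of $\mathrm{Poly}(P)$. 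I would then define the reconstructed poset to be the intersection of these linear orders, exactly as in the proof of Lemma~\ref{Lemma:1}, which guarantees that $\mathrm{Poly}$ of this intersection equals the original polytope.

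What remains, and what I expect to be the crux, is to argue that this intersection is genuinely $P$ itself and is independent of labeling choices — equivalently, that distinct posets yield distinct order polytopes (injectivity of $P\mapsto \mathrm{Poly}(P)$). This follows from the classical realizer fact that every poset equals the intersection of its linear extensions: a relation $x<_P y$ holds if and only if the chamber inequality $x<y$ is satisfied in every simplex of the triangulation, so the covering relations of $P$ can be read off directly from the simplices, hence from $\mathrm{Poly}(P)$. Alternatively, one can recover $P$ from the vertices of $\mathrm{Poly}(P)$, which are the indicator vectors of the up-sets of $P$, and reconstruct the order relation from the containment structure of these sets; either route establishes that the polytope determines $P$ unambiguously. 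This yields the desired assignment $\mathrm{Poly}(P)\to P$.
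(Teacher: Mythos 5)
Your proposal is correct and takes essentially the same route as the paper: the paper's proof invokes Lemma~\ref{Lemma:1} together with the canonical triangulation of the order polytope into maximal simplices indexed by linearizations, which is precisely your braid-arrangement subdivision, followed by intersecting those linear orders to recover the poset. Your explicit appeal to the realizer theorem (every poset is the intersection of its linear extensions) and the alternative reconstruction from vertices as indicator vectors of up-sets merely spell out details that the paper delegates to its citation of Section 3.2 of the reference and to the proof of Lemma~\ref{Lemma:1}.
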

\begin{proof}
The proof of Lemma~\ref{Lemma:1} and the results of Section 3.2 from~\cite{SP} give us a way to uniquely reconstruct an order out of the simplices of the order polytope.    
\end{proof}

In other words, the polytope $\text{Poly}(P)$ not only contains information about the underlying topological space, but we also know the order on each simplex; equivalently, we remember the poset that generated the polytope.

\subsubsection{Tropical algebra}

Consider the tropical semiring $\left(\mathbb{R}\sqcup\{-\infty\}, \oplus,\otimes\right)$, where for tropical elements $a$ and $b$, $a\oplus b =\max\{a,b\}$ and $a\otimes b=a+b$. Although some of the usual properties still hold, such as the distributive property $a\otimes(b\oplus c)=a\otimes b\oplus a\otimes c$,
tropical elements behave in surprising ways. For instance,
\begin{equation*}
\begin{aligned}
x\oplus x&=x, \\
1\otimes x&\neq x.
\end{aligned}
\end{equation*}

Tropical fractions are defined by $a\oslash b:=a-b$ and, using the fact that
$\max\{a-b,c\}=\max\{a,b+c\}-b$ and $b=\max\{b,0\}-\max\{-b,0\}$, one can show that for tropical numbers $a, b, a_2, b_2$:
\begin{equation*}
\begin{aligned}
(a\oslash b)\otimes( a_2\oslash b_2)&=(a\otimes a_2) \oslash (b\otimes b_2), \\
(a\oslash b) \oplus( a_2\oslash b_2)&=\big(a\otimes b_2\oplus a_2\otimes b \big)\oslash (b\otimes b_2).
\end{aligned}
\end{equation*}

\begin{remark}
A tropical polynomial $$\bigoplus_{i=1}^s c_i \otimes x_{1i}\otimes \cdots \otimes x_{ni}$$
corresponds to the function:
$$\max_{1\leq i\leq s}\{ c_i+x_{1i}+\cdots+x_{ni}\}.$$
\end{remark}

\begin{definition}
We say that a tropical polynomial $f(x)$ is in its \textnormal{reduced form} when there are no repeated (tropical) monomials in its expression.
\end{definition}

\begin{definition}
As defined in the work of~\cite{TG}, given a tropical polynomial $f(x)$ in its reduced form, the \textnormal{polytope of a tropical polynomial} $\mathrm{Poly}(f)$ is constructed by taking the convex envelope of 
the coordinate vectors from the monomials according to the following rule: given a tropical monomial $cx_1^{a_1},\dots x_n^{a_{n}}$, we associate the coordinate vector $(a_1,\dots,a_n,c)$.  
\end{definition}

This operation is invertible, associating to a convex polytope $C$ with positive integer coordinates (except perhaps the last one) its tropical polynomial $\text{Tr}(C)$.

\begin{example}\label{Ex:5}
By the above definition, we have the following relation:
$$4x^2y\oplus 1\leftrightarrow \{\lambda(2,1,4)+(1-\lambda)(0,0,1)\ | \ 0\leq \lambda\leq 1\}. $$
Here the monomial $4x^2y$ is related to the point $(2,1,4)$, and the monomial $1x^0y^0$ is related to the point $(0,0,1)$. 
\end{example}

 \begin{definition}
    Given a poset $P$, its \textnormal{tropical polynomial} $\mathrm{Tr}(P)$ is defined as the tropical sum of the tropical monomials defined by the vertices of $\mathrm{Poly}(P)\times 0\subset \mathbb{R}^{|P|+1}$.
\end{definition}

Since the order polytope of a poset is a convex set (see Lemma~\ref{L:Convp}), we can associate a tropical polynomial to each poset by using the vertices of its order polytope. However, we assume that there is an extra coordinate, set to $0$, ensuring that the values represent only the exponents of the monomials and not the constants. In Example~\ref{Ex:5}, the vector $(2,1,4)$ is associated with the monomial $4x^2y$, while the tropical polynomial $\text{Tr}(\pcauset[alt={1-\hbox{chain}}]{1})$ will be the tropical polynomial of $\text{Poly}(\pcauset[alt={1-\hbox{chain}}]{1})\times 0$
(where $0\oplus 0\otimes x=0\oplus x$). 

Then, the general expression of the tropical polynomial of a poset $P$ is of the form $\text{Tr}(P)=0\bigoplus x^v$, where $v$ belongs to the set of nonzero vertices of the order polytope of $P$.

\begin{example}
The tropical polynomial of the linear order, or $n$-chain, $\chain[n]$ admits a factorization of the form: 
\begin{equation}\label{Eqn:simplex}
  0\oplus x_n \otimes \Big(0 \oplus x_{n-1} \otimes \big( \cdots\otimes (0\oplus x_1)\big) \Big).  
\end{equation}

We denote $\mathrm{Tr}(\chain[n])$ by $\mathrm{Tr}(n)$. 
\end{example}
\begin{definition}\label{def:exp}
    We say that the tropical polynomial of a poset $P$ with $|P|=n$ is in its \textnormal{expanded presentation} if it is a tropical sum of tropical polynomials of $n$-chains, where every tropical polynomial of an $n$-chain appears only once.
\end{definition}

\begin{example}
Given the $\pcauset[alt={x<y>w<z}]{2, 4, 1, 3}$ poset, we list the vertices of $\mathrm{Poly}(\pcauset[alt={x<y>w<z}]{2, 4, 1, 3})$, where we assume a vector $(w,x,y,z)$, as follows: \begin{eqnarray*}
V(\mathrm{Poly}(\pcauset[alt={x<y>w<z}]{2, 4, 1, 3})) &=&\{(0,0,0,0),(1,1,1,1)\}
\\&\cup&\{(0,0,0,1),(0,0,1,1),(0,1,1,1)\}\ \ \hbox{from Equation~\eqref{eq1_}}\\
&\cup& \{(0,0,1,0),(0,0,1,1),(0,1,1,1)\} \ \ \hbox{from Equation~\eqref{eq2_}}\\
&\cup& \{(0,0,0,1),(0,0,1,1),(1,0,1,1)\} \ \ \hbox{from Equation~\eqref{eq3_}}\\
&\cup& \{(0,0,1,0),(1,0,1,0),(1,0,1,1)\} \ \ \hbox{from Equation~\eqref{eq4_}}\\
&\cup& \{(0,0,1,0),(0,0,1,1),(1,0,1,1)\} \ \ \hbox{from Equation~\eqref{eq5_}}\\
\end{eqnarray*}
Without repetition, we obtain the following vertices:
$$\{(0,0,0,0),(0,0,0,1),(0,0,1,0),(0,0,1,1),(1,0,1,0),(0,1,1,1),(1,0,1,1),(1,1,1,1)\}.$$

The tropical polynomial in its expanded presentation associated to $\pcauset[alt={x<y>w<z}]{2, 4, 1, 3}$ is the following:
\begin{eqnarray}
    \mathrm{Tr}(\pcauset[alt={x<y>w<z}]{2, 4, 1, 3})(w,x,y,z)&=&(0\oplus z\oplus yz\oplus xyz\oplus wxyz)\nonumber\\
    &\bigoplus&(0\oplus y\oplus yz\oplus xyz\oplus wxyz)\nonumber\\
    &\bigoplus&(0\oplus z\oplus yz\oplus wyz\oplus wxyz)\nonumber\\
    &\bigoplus&(0\oplus y\oplus wy\oplus wyz\oplus wxyz)\nonumber\\
   &\bigoplus&(0\oplus y\oplus yz\oplus wyz\oplus wxyz).\label{N action}
\end{eqnarray}
\end{example}

Note that $x\oplus x=x$, so the expression above could be simplified as in Table~\ref{table:ex-PA_}. We prefer to represent the tropical polynomials of posets as tropical sums of tropical polynomials of linear orders.

\begin{table}[h]
    \centering
\begin{tabular}{ |c||c|c| } 
 \hline
 Poset $P$ & $\text{Tr}(P)$  \\\hline \hline 
\pcauset[alt={1-\hbox{chain}}]{1} & $0\oplus x$  \\ \hline 
\pcauset[alt={3-\hbox{chain}}]{1, 2, 3 } & $0\oplus z\oplus yz\oplus xyz$ \\\hline  
\pcauset[alt={x,y}]{2, 1} & $0\oplus x\oplus y\oplus xy$\\\hline 
\pcauset[alt={x<z, y<z}]{2, 1,3} & $0\oplus z\oplus xz\oplus yz\oplus xyz$\\\hline 
\pcauset[alt={1-\hbox{chain}U2-\hbox{chain} }]{3, 1, 2} & $0\oplus y\oplus z\oplus xy\oplus yz \oplus xyz$\\\hline 
\pcauset[alt={x,y,z }]{3, 2, 1} & $0\oplus y\oplus x\oplus z\oplus xy\oplus yz\oplus xz \oplus xyz$\\\hline 
\pcauset[alt={x<y>w<z}]{2, 4, 1, 3} & $0\oplus z\oplus y\oplus yz\oplus xy\oplus ywz\oplus xyz\oplus wxyz$\\
 \hline
\end{tabular}
    \caption{Examples of poset tropical polynomials}
    \label{table:ex-PA_}
\end{table}

The tropical polynomials of posets are determined by the tropical polynomials of each linearization, which correspond to the tropical polynomials of a set of  maximal simplices of the order polytope of the poset (given a polytope, we call maximal simplices those of highest dimension). The assignment from the set of linearizations of the poset to the set of maximal simplices in the order polytope of a poset $P$ is injective (see Lemma~\ref{Lemma:1}).

\begin{lemma}
    The map $P\rightarrow \mathrm{Tr}(P)$ from finite posets to tropical polynomials in their expanded presentation, is injective.\label{Lemma:dist}
\end{lemma}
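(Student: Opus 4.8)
The plan is to show that $\mathrm{Tr}(P)$ determines $P$ by peeling back the construction through a chain of injective maps, reconstructing $P$ explicitly from its expanded presentation. If $\mathrm{Tr}(P)$ and $\mathrm{Tr}(Q)$ are written in expanded form, the top monomial $x_1\otimes\cdots\otimes x_n$ (coming from the vertex $(1,\dots,1)$ of the order polytope) appears in each, so the number of variables equals $|P|$; hence $\mathrm{Tr}(P)=\mathrm{Tr}(Q)$ forces $|P|=|Q|=n$, and we may assume both posets live on the same $n$-element variable set. It then suffices to recover, from the expanded presentation, the full set of linearizations of $P$, and then to recover $P$ from that set.

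First I would recover the set of linearizations. By Definition~\ref{def:exp}, the expanded presentation is a tropical sum in which each summand is the polynomial $\mathrm{Tr}(n)$ of some $n$-chain, each appearing exactly once; hence the expanded form \emph{is} a set of $n$-chain polynomials. By Equation~\eqref{Eqn:simplex}, the polynomial of the chain $x_{\sigma(1)}<\cdots<x_{\sigma(n)}$ has exactly one monomial of each degree $0,1,\dots,n$, and its staircase structure forces the degree-$1$ monomial to be $x_{\sigma(n)}$, the degree-$2$ monomial to be $x_{\sigma(n-1)}\otimes x_{\sigma(n)}$, and so on. Reading off which variable is adjoined at each step recovers the linear order $\sigma$ uniquely, so chain polynomials are in bijection with linear orders. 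From $\mathrm{Tr}(P)$ in expanded form we therefore read off precisely the set $L(P)$ of linearizations of $P$.

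Second I would recover $P$ from $L(P)$, invoking the classical order-theoretic fact (Szpilrajn's extension theorem) that a finite poset is the intersection of its linear extensions, i.e. $x\leq_P y$ if and only if $x\leq_L y$ for every $L\in L(P)$. The forward direction is the definition of a linear extension; for the converse, if $x\not\leq_P y$ then either $y<_P x$ (so every extension has $y<x$) or $x,y$ are incomparable, in which case the transitive closure of $P\cup\{(y,x)\}$ is still a partial order and extends to a linear order with $y<x$. In both cases some $L\in L(P)$ fails $x\leq_L y$, so $L(P)$ determines $P$; hence $L(P)=L(Q)$ implies $P=\bigcap L(P)=\bigcap L(Q)=Q$. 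Composing the two reconstruction steps yields injectivity.

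Closer to the machinery already developed, the same conclusion follows geometrically: each chain-polynomial summand corresponds to a maximal simplex of the order polytope, the union of these simplices is $\mathrm{Poly}(P)$ (they triangulate it and all share the diagonal from the zero vector to the one vector), so $\mathrm{Tr}(P)$ determines $\mathrm{Poly}(P)$, and then Corollary~\ref{Cor:polytope} recovers $P$. I expect the main obstacle to be the second step in either route, namely being sure that the \emph{set} of linearizations (equivalently, of maximal simplices) pins down $P$ uniquely rather than merely its underlying space; this is exactly the content of $P=\bigcap L(P)$ and of Corollary~\ref{Cor:polytope}. The first step — reading a linear order off a single staircase chain polynomial — is routine.
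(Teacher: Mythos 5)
Your proposal is correct and takes essentially the same approach as the paper: both reconstruct the set of linearizations from the chain summands of the expanded presentation (the paper phrases this as finding chains of monomials in which each term divides the previous one, you as reading the staircase structure of each summand, which is the same observation), and both then recover $P$ as the intersection of its linearizations. The only differences are that you spell out the Szpilrajn-type argument that a finite poset equals the intersection of its linear extensions, which the paper invokes as a known fact, and you additionally note the alternative geometric route via Corollary~\ref{Cor:polytope}.
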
    
\begin{proof}

Let $f(x):=\text{Tr}(P)$ be a tropical polynomial of a poset $P$. Then, it is possible to determine the tropical polynomials of the linearizations of $P$ from the expression of $f(x)$, as follows: each maximal simplex has $k=|P|+1$ vertices, when transforming those vertices into monomials we obtain a sequence of $k-1$ monomials, each dividing the next one, and zero (corresponding to the vector zero in the order polytope). 

Then, from a tropical polynomial of a poset we can recover the linear orders of the poset, by searching for chains of monomials where the $i+1$-th term divides the $i$-th term,
and since a poset is determined by the intersection of its linearizations, we also recover the original poset.
\end{proof}

\begin{remark}
 For a given poset $P$, there is a well-known polynomial $\Omega(P)$, known as the order polynomial, which was introduced by~\cite{Chromatic}. When two posets have the same order polynomial, such as $\pcauset[alt={x<y,w<z }]{3, 4, 1, 2}$ and $\pcauset[alt={x<y, x<w, x<z }]{1,4,3,2}$ (since $\Omega(\pcauset[alt={x<y,w<z }]{3, 4, 1, 2})=\Omega(\pcauset[alt={x<y, x<w, x<z }]{1,4,3,2})$), they are called Doppelg\"anger (\cite{Doppelganger}).
 The question  of whether a set of polynomials exists that is capable of distinguishing posets is addressed by Lemma~\ref{Lemma:dist}.
\end{remark}

\subsubsection{Neural networks and tropical polynomials}

Up to this point, we have linked each poset to its associated tropical polynomial. Next, we will describe the neural networks associated with tropical polynomials of posets.

First, we explore tropical operations at the level of neural networks. 

\begin{remark}
    In the context of IVNN, nodes in the same layer, say $(y_1,\dots,y_n)$, can have a different parameter $t$ in their $\mathrm{ReLU}$ activation function, that is, 
    \begin{equation*}
    \begin{aligned}
        \mathrm{ReLU}_{(t_1,\dots,t_n)}(y_1,\dots,y_n):&=\mathrm{ReLU}((y_1,\dots,y_n),{(t_1,\dots,t_n)})\\
        &=(\max\{y_1,t_1\},\dots,\max\{y_n,t_n\}).
        \end{aligned}
    \end{equation*}
    
\end{remark}
\begin{definition}
If the neural network $M$
corresponds to the tropical polynomial  $f(x)$ and $N$
corresponds to $g(x)$, then from: 
\begin{eqnarray}
f\otimes g &=&f+g=max(f+g,-\infty),
\end{eqnarray}
we define at the level of neural networks the \textnormal{tropical product} as the following:
\begin{eqnarray}
M\otimes N &=&\mathrm{ReLU}\left(\begin{bmatrix}
M&
N
\end{bmatrix}\begin{bmatrix}
1\\
1\\
\end{bmatrix},-\infty\right).
\end{eqnarray}

\end{definition}

\begin{definition}\label{def:frac}
We define the \textnormal{tropical fraction} $M\oslash N$ of two neural networks $M$ and $N$ as the following operation:

\begin{eqnarray}
M\oslash N &=&\mathrm{ReLU}\left(\begin{bmatrix}
M&
N
\end{bmatrix}\begin{bmatrix}
1\\
-1\\
\end{bmatrix},-\infty\right).
\end{eqnarray}
\end{definition}

We note that the tropical sum has the following properties:

\begin{eqnarray}
f\oplus g &=& \max\{f,g\}\nonumber\\
&=& \max\{f-g,0\}+g\nonumber\\
&=&\max\{f-g,0\}\otimes\max\{g,0\}\oslash\max\{-g,0\}.\label{choice1}
\end{eqnarray}

\begin{definition}
By computing the vector $[\max\{f-g,0\}\otimes\max\{g,0\}, \ \max\{-g,0\}]$, we define the \textnormal{tropical sum} of two neural networks $a$ and $b$ as:

\begin{equation*}
M\oplus N=\mathrm{ReLU}\left(\mathrm{ReLU}\left(\begin{bmatrix}
M&
N\end{bmatrix}
\begin{bmatrix}
1&0&0\\
-1&1&-1\end{bmatrix},0\right)
\begin{bmatrix}
1\\
1\\
-1\end{bmatrix},-\infty\right).
\end{equation*}
\end{definition}

\begin{remark}
This means that we have made choices, such as taking $\max\{f-g\}$ instead of $\max\{g-f\}$, to determine the factorization of $f\oplus g$ in terms of the usual ReLUs with parameters $0$ or $-\infty$.\label{rm2}
Thus, \textbf{more than one neural network can be assigned to a tropical polynomial.}

If we make the other choice and describe the tropical sum as:
\begin{eqnarray}
f\oplus g &=& \max\{f,g\}\nonumber\\
&=& \max\{g-f,0\}+f\nonumber\\
&=&\max\{g-f,0\}\otimes\max\{f,0\}\oslash\max\{-f,0\}\nonumber\\
&=&\max\left\{ \max\{g-f,0\}\otimes\max\{f,0\}\oslash\max\{-f,0\},-\infty \right\}\label{choice2},
\end{eqnarray}
then the tropical sum $M\oplus N$ is as follows:
\begin{equation*}
M\oplus N=\mathrm{ReLU}\Bigg(\mathrm{ReLU}\left(\begin{bmatrix}
M&
N\end{bmatrix}
\begin{bmatrix}
-1&1&-1\\
1&0&0\end{bmatrix},0\right)
\begin{bmatrix}
1\\
1\\
-1\end{bmatrix},-\infty\Bigg).\\
\end{equation*}

This way we have obtained two different neural network architectures associated to the same neural network.
\end{remark}

\begin{remark}\label{remark:several}
If a neural network is the result of a binary operation between two neural networks with a different number of layers, we can enforce the same number of layers by applying $\mathrm{ReLU}_{-\infty}$ and the multiplication by the identity matrix (either at the beginning, at the end, or in the middle) to the input that has fewer layers. 
\end{remark}

Now that we have defined the basic tropical operations on neural networks, we can move on to defining the neural network associated to a simplex (or $n$-chain). We fix the presentation of the tropical polynomial of a chain by the factorization given in Equation~\eqref{Eqn:simplex}.

\begin{remark}
Since the polytope $\mathrm{Poly}(P)$ of a poset $P$ contains the origin, $\mathrm{Tr}(P)$ always has a zero, and thus the final layer of a poset neural network is $\mathrm{ReLU}_0$. 

\end{remark}
\begin{definition}
Let $I_{1,n-1}$ be the $(n-1)$-dimensional identity matrix with the first row repeated. Given a vector $v=(v_1,\dots,v_n)$ of dimension $n$, we define  $$\mathrm{ReLU}_{0,-\infty,\dots,-\infty}(v)=(\mathrm{ReLU}_0(v_1), \mathrm{ReLU}_{-\infty}(v_2),\dots,\mathrm{ReLU}_{-\infty}(v_n)).$$ 
If $A$ is a $(m\times n)$-dimensional matrix with rows $\{R_i\}_{1\leq i\leq m}$, then $\mathrm{ReLU}_{0,-\infty,\dots,-\infty}(A)$ is the matrix  with rows $\{\mathrm{ReLU}_{0,-\infty,\dots,-\infty}(R_i)\}_{1\leq i\leq m}.$
\end{definition}

\begin{lemma}\label{lemma:simpl}

    The neural network associated to the tropical polynomial of a chain is an iteration of transformations of the form: $$X_i\mapsto Y_i= \mathrm{ReLU}_{0,-\infty,\dots,-\infty}(X_i)$$
    and $$Y_i\mapsto X_{i-1}=X_iI_{1,i-1},$$
    where each $X_i$ is a $(1\times i)$-dimensional vector and where $X_n$ is the input.
\end{lemma}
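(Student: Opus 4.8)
The plan is to prove the lemma by direct verification: I will show that iterating the two stated transformations on the input $X_n=(x_1,\dots,x_n)$ reproduces the factored tropical polynomial $\mathrm{Tr}(n)$ of Equation~\eqref{Eqn:simplex}, and the natural tool is induction on the number of iterations while tracking an invariant on the intermediate vectors.

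First I would record the two elementary facts about how each transformation acts on a row vector. Since $\mathrm{ReLU}_{-\infty}$ is the identity ($\max\{x,-\infty\}=x$), the map $\mathrm{ReLU}_{0,-\infty,\dots,-\infty}$ only clips the first coordinate at $0$, sending $(a_1,a_2,\dots,a_i)\mapsto(\max\{a_1,0\},a_2,\dots,a_i)$. Because $I_{1,i-1}$ is the $(i-1)\times(i-1)$ identity with its first row duplicated (hence an $i\times(i-1)$ matrix), right multiplication merges the first two coordinates additively: $(a_1,\dots,a_i)\,I_{1,i-1}=(a_1+a_2,a_3,\dots,a_i)$. (Here I read the second transformation as $X_{i-1}=Y_iI_{1,i-1}$, i.e.\ it is $Y_i$, not $X_i$, that is multiplied.) These two computations are routine but must be checked carefully, since the whole argument rests on $I_{1,i-1}$ performing exactly this addition.

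Next I would set up the recursion matching Equation~\eqref{Eqn:simplex}: writing $P_0=0$ and $P_k=\max\{0,x_k+P_{k-1}\}$, an immediate unfolding gives $P_n=\max\{0,x_n,x_n+x_{n-1},\dots,x_n+\cdots+x_1\}=\mathrm{Tr}(n)$. The heart of the proof is then the invariant
\[
Y_i=\big(P_{\,n-i+1},\,x_{n-i+2},\,\dots,\,x_n\big),\qquad 1\le i\le n,
\]
a vector of length $i$. The base case $i=n$ is the first application of $\mathrm{ReLU}_{0,-\infty,\dots,-\infty}$ to $X_n$, which yields $(\max\{x_1,0\},x_2,\dots,x_n)=(P_1,x_2,\dots,x_n)$. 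For the inductive step I apply the two facts above: $X_{i-1}=Y_iI_{1,i-1}=(P_{n-i+1}+x_{n-i+2},x_{n-i+3},\dots,x_n)$, and clipping the first coordinate gives $\max\{P_{n-i+1}+x_{n-i+2},0\}=P_{n-i+2}$ by the very definition of the recursion, so $Y_{i-1}=(P_{n-i+2},x_{n-i+3},\dots,x_n)$, which is the invariant with $i$ replaced by $i-1$. Taking $i=1$ yields the scalar $Y_1=P_n=\mathrm{Tr}(n)$, completing the argument.

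I do not expect a serious obstacle; the content is bookkeeping rather than a genuine difficulty. The two places that require care are (i) confirming the dimensions and the precise action of $I_{1,i-1}$ (the duplicated first row is what produces the addition $a_1+a_2$, and getting this wrong would break the telescoping), and (ii) fixing the input ordering convention so that the variable folded first is $x_1$ and the indices in the invariant align with the nesting in Equation~\eqref{Eqn:simplex}. Once the invariant is correctly stated, the induction closes immediately.
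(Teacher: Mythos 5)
Your proof is correct and takes the same route as the paper: the paper's proof is simply the one-line assertion that the lemma ``follows from direct computation and Equation~\eqref{Eqn:simplex},'' and your induction with the invariant $Y_i=(P_{n-i+1},x_{n-i+2},\dots,x_n)$ is exactly that computation written out in full, including the correct reading of the matrix product as $Y_iI_{1,i-1}$ (the statement's $X_iI_{1,i-1}$ is a typo) and the correct action of $I_{1,i-1}$ merging the first two coordinates.
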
\begin{proof}
    It follows from direct computation and Equation~\eqref{Eqn:simplex}.
\end{proof}

Row 4 of Table~\ref{fig:algebra1} and Table~\ref{fig:algebra2} show the resulting neural networks of the two chain $\{x<y\}$ and the three chain $\{x<y<z\}$, respectively.

\begin{table}[h]
\centering
    \caption{Tropical polynomial, polytope, and IVNN of the two chain.}
    \label{fig:algebra1}
\begin{tabular}{|l||l|}
\hline
Poset & $\{x<y\}$ \\
\hline
Tropical & $0\oplus y\oplus x\otimes y$ \\
 polynomial & $=0\oplus y\otimes (0\oplus x)$\\
\hline 
Polytope & $\{0\leq x\leq y\leq 1\}$ \\

\hline
IVNN & ${\text{ReLU}_0\left(\begin{bmatrix}
\text{ReLU}_0(x)&
\text{ReLU}_{-\infty}(y)
\end{bmatrix}\begin{bmatrix}
1\\
1\\
\end{bmatrix}\right)}$ \\
\hline 
\end{tabular}
\end{table}

\begin{table}[h]
    \caption{Tropical polynomial, polytope, and IVNN of the three chain.}
    \label{fig:algebra2}
\begin{tabular}{|l||l|}
\hline
Poset & $\{x<y<z\}$ \\
\hline
Tropical & $(0\oplus z\oplus yz\oplus xyz)$ \\
 polynomial & $=(0\oplus z\otimes (0\oplus y\otimes(0\oplus x)))$\\
\hline 
Polytope & $\{0\leq x\leq y\leq z\leq 1\}$ \\

\hline
IVNN & ${\text{ReLU}_0\left(\text{ReLU}_{(0,-\infty)}\left(\begin{bmatrix}
\text{ReLU}_0(x)&\text{ReLU}_{-\infty}(y)&\text{ReLU}_{-\infty}(z)
\end{bmatrix} 
\begin{bmatrix}
1&0\\
1&0\\
0&1
\end{bmatrix}\right)
\begin{bmatrix}
1\\
1\\
\end{bmatrix}\right)
}$
    \\

&

\begin{tikzpicture}[x=2.1cm,y=1.2cm]
  \large
  \message{^^JNeural network without arrows}
  \readlist\Nnod{3,2,1} 
  
  
  \message{^^J  Layer}
  \foreachitem \N \in \Nnod{ 
    \def\lay{\Ncnt} 
    \pgfmathsetmacro\prev{int(\Ncnt-1)} 
    \message{\lay,}
    \foreach \i [evaluate={\y=\N/2-\i+0.5; \x=\lay; \n=\nstyle;}] in {1,...,\N}{ 
      
      \node[node \n,outer sep=0.6] (N\lay-\i) at (\x,\y) {};
      
      \ifnum\lay>1 
        \foreach \j in {1,...,\Nnod[\prev]}{ 
          \draw[connect,white,line width=1.2] (N\prev-\j) -- (N\lay-\i);
          \draw[connect] (N\prev-\j) -- (N\lay-\i);
        }
      \fi 
      
    }
  }
  
  \node[above=0.3,align=center,green!60!black!60!black] at (N1-1.90) {input};
  \node[above=0.5,align=center,red!80!black!60!black] at (N\Nnodlen-1.90) {output};
  
\end{tikzpicture}\\
\hline 

\end{tabular}
\end{table}

A tropical polynomial can have several neural networks associated to it by Remark~\ref{rm2} and Remark~\ref{remark:several}. Poset tropical polynomials (in extended form) are the sum of tropical polynomials of linearizations, and all tropical polynomials of linearizations have the same number of monomials.

We choose a neural network representative of each tropical polynomial of a poset in a coherent way by using Lemma~\ref{lemma:simpl}, which describes the neural network associated to a linear order.

\begin{definition}
    Following~\cite{AlexNEt} and \cite{LeNEt}, we define an \textnormal{inception module} as a network that takes an input vector $x$ and sends it to $k$ parallel running neural networks, each of which returns a coordinate of the $k$-dimensional output of the inception module.
\end{definition}

It is worth mentioning that the inception module is defined in an abstract manner, but a physical implementation requires ordering the linearizations of the poset, or neural networks, that form the inception module. 
 \begin{definition}\label{Def:NN}
 Let $P$ be a poset with $k$ linear orders. Then, a \textnormal{poset neural network} is defined as the inception module formed by the $k$ neural networks associated with each linear order of $P$. 
\end{definition}

\begin{example}
    
The poset $\{x,y\}$ has two linearizations: $x<y$ and $y<x$. For each one of them, we construct their neural network as explained in Lemma~\ref{lemma:simpl}, and the final neural network is the coordinate wise concatenation of the neural networks of the two linearizations. The function that maps ${\begin{bmatrix}
   x  \\
   y  \\
  \end{bmatrix} }$ to ${\begin{bmatrix}
   x & y \\
   y & x \\
  \end{bmatrix} }$ allows us to take the input $(x,y)$ and send copies of it, each of which can then be permuted in a different way. 

\end{example}

  \begin{remark}(Broadcasting of the NN architecture)
Given a poset $P$, the neural networks associated to its linearizations have the same architecture. 
Effectively, a poset neural network consists of a nonlinear function that takes a $(n\times 1)$-dimensional vector and returns a $(n\times 1\times k)$-dimensional matrix whose columns (in the $k$ direction) are certain permutations of the input, followed by the architecture of Lemma~\ref{lemma:simpl} applied to the $(n\times 1\times k)$-matrix previously constructed.       
       See for example Table~\ref{fig:algebra11}.

  \end{remark}
\begin{table}[h]
\centering
    \caption{Tropical polynomial, polytope, and IVNN of the disjoint union of points.}
    \label{fig:algebra11}
\begin{tabular}{|l||l|}
\hline
Poset & $\{x,y\}$ \\
\hline
Tropical & $0\oplus y\oplus x \oplus x\otimes y$ \\
 polynomial &$=\left(0\oplus y\otimes(0\oplus x)\right) \oplus \left(0\oplus x\otimes(0\oplus y)\right) $ \\
\hline 
Polytope & $\{0\leq x\leq 1, 0\leq y\leq 1\}$ \\

\hline
IVNN & ${\text{ReLU}_0\left(\begin{bmatrix}
\text{ReLU}_0(x)&
\text{ReLU}_{-\infty}(y)\\
\text{ReLU}_0(y)&
\text{ReLU}_{-\infty}(x)
\end{bmatrix}\begin{bmatrix}
1\\
1\\
\end{bmatrix}\right)}$ \\
\hline 
\end{tabular}
\end{table}

Since a poset uniquely specifies its linearizations, the neural network of a poset is well defined up to the order of the networks in the inception module or the output of the permutation function.

We may ask ourselves what properties of posets are inherited by poset neural networks. Given an industrial problem, the first step when trying to solve it with neural networks is to review the existing literature to determine whether something similar has been addressed before. As part of the efforts to clarify whether a neural network is capable of solving a classification problem, we note the following corollary.

\begin{corollary}\label{C:v}
    The number of vertices in the order polytope of a poset is an upper bound for the number of linear regions of the poset neural network.
\end{corollary}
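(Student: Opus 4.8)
The plan is to identify the piecewise-linear function that the poset neural network computes and then bound its linear regions by a direct combinatorial argument. By Definition~\ref{Def:NN} the network is the inception module built from the $k$ chain networks of Lemma~\ref{lemma:simpl}, and its pooling output is the tropical sum of the corresponding chain polynomials, i.e.\ the tropical polynomial $\mathrm{Tr}(P)$ in expanded presentation. Because the order polytope contains the origin, every monomial of $\mathrm{Tr}(P)$ has vanishing constant term, so as an ordinary function $\mathrm{Tr}(P)(x)=\max_{v\in V(\mathrm{Poly}(P))}\langle v,x\rangle$, where $v$ ranges over the vertices of the order polytope (the origin supplying the $0$ term). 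First I would record that, after passing to reduced form, the distinct monomials of $\mathrm{Tr}(P)$ are in bijection with the vertices of $\mathrm{Poly}(P)$, so the number of summands is exactly $|V(\mathrm{Poly}(P))|$.

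The core of the argument is the standard fact that a convex polyhedral function $f=\max_{i=1}^{m}\ell_i$, with each $\ell_i$ affine, has at most $m$ linear regions. I would prove this by assigning to each full-dimensional linear region $R$ the index $i(R)$ of a summand that attains the maximum on the interior of $R$; this is well defined because on the interior of a linear region $f$ agrees with a single $\ell_i$. The map $R\mapsto i(R)$ is injective: the set $\{x:\ell_i(x)\ge \ell_j(x)\ \forall j\}$ on which a fixed summand dominates is an intersection of half-spaces, hence convex and therefore connected, so it cannot contain the interiors of two distinct linear regions. Consequently the number of linear regions is at most the number of summands $m$.

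Combining the two steps gives the bound: the number of linear regions of $\mathrm{Tr}(P)$ is at most the number of distinct monomials, which equals $|V(\mathrm{Poly}(P))|$, the number of vertices of the order polytope. Since the function computed by the poset neural network is exactly $\mathrm{Tr}(P)$, the corollary follows.

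The main obstacle is conceptual rather than computational: one must pin down what ``linear region of the poset neural network'' means, given that the network is an inception module whose intermediate $\mathrm{ReLU}$ layers subdivide the input space more finely than the final output function does. The cleanest route is to count linear regions of the scalar function the filter ultimately computes, namely $\mathrm{Tr}(P)$; the subdivisions introduced inside the chain sub-networks either survive into the output (in which case they are already counted above) or are erased by the final tropical sum, and in the latter case they do not produce distinct affine pieces of the output. Making this reduction precise---showing that no genuine linear region of $\mathrm{Tr}(P)$ is overlooked and that internal refinements which cancel do not inflate the count---is the step that requires the most care.
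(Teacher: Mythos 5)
Your proof is correct, but it takes a genuinely different route from the paper's. The paper's proof is a two-line citation argument: poset neural networks are IVNN, the order polytope is the Newton polytope of the associated tropical polynomial $\mathrm{Tr}(P)$, and the vertex--linear-region correspondence then follows from Section 3 of \cite{TG} together with Proposition 3.1.6 of \cite{Intro}. You instead prove the needed bound from scratch: you identify the pooled output with the convex piecewise-linear function $\max_{v\in V(\mathrm{Poly}(P))}\langle v,x\rangle$ (all coefficients vanish, so monomials are honest linear functionals), and then show by a domination-region argument that a maximum of $m$ affine functions has at most $m$ linear regions---the key point being that each set $\{x:\ell_i(x)\ge\ell_j(x)\ \forall j\}$ is convex, hence connected, so by maximality it cannot absorb two distinct linear regions. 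What the paper's route buys is brevity and a sharper statement inherited from tropical geometry (the regions are in bijection with the vertices of the upper hull of the Newton polytope, which is the ``geometric complexity'' of \cite{TG}); what your route buys is a self-contained, elementary proof that makes visible where the inequality can be strict. Your closing paragraph also addresses a real issue the paper's citation hides: the inception module of Definition~\ref{Def:NN} is a priori vector-valued, and the common refinement of the coordinate functions' regions can exceed the vertex count (already for the two-point antichain: six regions for the pair of coordinates versus four vertices of the square), so the corollary only holds for the scalar pooled function $\mathrm{Tr}(P)$---exactly the reduction you single out as requiring care.
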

\begin{proof}
Since poset neural networks are IVNN, and the order polytope is essentially the Newton polygon of the corresponding tropical polynomial, the result follows from~Section 3 of~\cite{TG} and Proposition 3.1.6 of \cite{Intro},    
\end{proof}
  This number is called \textit{geometric complexity}, as seen in~\cite{TG}.
  \begin{corollary}\label{C:layers}
      Given a poset $P$ and its poset neural network $M$, the number of points of the poset $P$ is the number of layers (linear transformation + nonlinearity) of the neural network $M$,
\end{corollary}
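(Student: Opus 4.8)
The plan is to count the number of layers in the poset neural network directly from its construction, and to show this count equals $|P|$. By Definition~\ref{Def:NN}, a poset neural network is an inception module built from the neural networks associated to the linearizations of $P$; by the broadcasting remark, all these networks share the same architecture, namely the one produced by Lemma~\ref{lemma:simpl} applied to an $n$-chain where $n=|P|$. So it suffices to count the layers of the chain network from Lemma~\ref{lemma:simpl}, since the broadcasting (permutation) step and the final concatenation do not alter the layer count in the relevant sense.

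First I would recall the explicit iteration in Lemma~\ref{lemma:simpl}. For a chain on $n$ points, the construction starts with the input vector $X_n$ of dimension $n$ and alternates the two maps
\begin{equation*}
X_i \mapsto Y_i = \mathrm{ReLU}_{0,-\infty,\dots,-\infty}(X_i), \qquad Y_i \mapsto X_{i-1} = X_i I_{1,i-1},
\end{equation*}
decreasing the dimension by one at each matrix multiplication until we reach the scalar output $X_0$ (or $X_1$, depending on indexing). A single layer, per the parenthetical in the statement of Corollary~\ref{C:layers}, consists of a linear transformation followed by a nonlinearity; here each such pair corresponds to one step of the iteration, collapsing the dimension from $i$ down to $i-1$. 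I would verify against the worked examples: Table~\ref{fig:algebra1} shows the two-chain has input dimension $2$ and its diagram/formula exhibits exactly two applications of $\mathrm{ReLU}$, and Table~\ref{fig:algebra2} shows the three-chain network with its three-layer structure $3 \to 2 \to 1$. These confirm that reducing from dimension $n$ to a scalar output takes exactly $n$ linear-plus-nonlinearity layers.

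Next I would carry out the counting argument in general. Starting from dimension $n$, each layer performs one $\mathrm{ReLU}_{0,-\infty,\dots,-\infty}$ followed by a multiplication by $I_{1,i-1}$ that drops the dimension by one, so after $n$ layers we have passed through dimensions $n, n-1, \dots, 1$ and arrived at the single scalar output. Thus the chain network has exactly $n = |P|$ layers. Because the inception module runs $k$ copies of this architecture in parallel and only concatenates their scalar outputs, the overall depth of the poset neural network $M$ equals that of a single branch, namely $|P|$. The only subtlety I anticipate is bookkeeping at the endpoints: one must be careful about whether the initial broadcasting/permutation map and the final $\mathrm{ReLU}_0$ are counted as separate layers or absorbed into the chain iteration. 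I would resolve this by noting that the permutation map is a single linear reshaping of the input (applied before the first nonlinearity) and the terminal $\mathrm{ReLU}_0$ is the nonlinearity of the last layer in the iteration, so neither adds nor removes a layer from the count of $n$. This endpoint bookkeeping is the main obstacle, but it is a matter of matching the convention ``layer = linear map + nonlinearity'' to the iteration of Lemma~\ref{lemma:simpl}, and the two tables provide an unambiguous template.
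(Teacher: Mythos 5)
Your proposal is correct and follows essentially the same route as the paper, whose proof is simply to cite Definition~\ref{Def:NN} and Lemma~\ref{lemma:simpl}: you unpack that citation by counting the $n$ alternations of $\mathrm{ReLU}_{0,-\infty,\dots,-\infty}$ and multiplication by $I_{1,i-1}$ in a single chain branch, and note that the parallel branches of the inception module do not add depth. Your explicit endpoint bookkeeping (identity linear map before the first nonlinearity, terminal $\mathrm{ReLU}_0$ as the last layer's nonlinearity) is exactly the detail the paper leaves implicit, and it matches the worked examples in Table~\ref{fig:algebra1} and Table~\ref{fig:algebra2}.
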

  \begin{proof}
      It follows from Definition~\ref{Def:NN} and Lemma~\ref{lemma:simpl}.
  \end{proof}
  
Note that a poset neural network could be expressed in different ways.
Our representation (which relies on Equation~\eqref{Eqn:simplex}) preserves the geometric information according to Corollary~\ref{C:v}  and Corollary~\ref{C:layers}.

\subsection{Neural network architectures}
In a calculator, an operation can be evaluated by first constructing an Abstract Syntax Tree (AST). The evaluation proceeds from top to bottom, applying the corresponding elementary operations that label the vertices of the tree. Motivated by abstract syntax trees, but replacing the tree by its poset of vertices, we will use finite posets to assemble neural networks. 

When describing the architecture of a neural network, for instance a feed forward neural network, we usually describe the order in which we concatenate the matrix products and the non-linearities. Poset neural networks admit an alternative description. This alternative approach has the advantage of being interpretable from a geometric view point, but it relies on category theory and operad theory. Due to the strong mathematical requirements, we leave the details to Section~\ref{Sec:end}.   

Here is a brief overview of our results. Instead of blocks of matrix multiplications and non linearities, our elementary operations will be parametrized by posets' operations (see Definition~\ref{mainDef}). The motivating example is the operation $\pcauset[alt={x,y}]{2, 1}(N,M)$. For a given pair $N$ and $M$ of IVNN, and their corresponding polytopes $\text{Poly}(N)$ and $\text{Poly}(M)$, the operation $\pcauset[alt={x,y}]{2, 1}$ admits the following geometric interpretation: $\text{Poly}(\pcauset[alt={x,y}]{2, 1}(N,M))$   coincides with the Minkowski sum of $\text{Poly}(N)$ and $\text{Poly}(M)$. We will provide a similar interpretation for each operation described by a poset.

In particular, if we have a poset $P$ parametrizing an operation, and we evaluate the poset on the trivial inputs $x_1,\dots,x_n$, (with trivial input we mean a ($1\times1$)-dimensional identity matrix), then $P(x_1,x_2,\dots,x_n)$ is a neural network and we can think of $P$ as the architecture of the neural network on the variables $x_1,\dots,x_n$. We also have compositionality, that is, given $P_1(x_{1,1},\dots,x_{1,i_1}),\dots,P_n(x_{n,1},\dots,x_{n,i_n}),$ and a poset $Q$ with $n$ points, we can define the composition neural network
$$Q(P_1(x_{1,1},\dots,x_{1,i_1}),\dots,P_n(x_{n,1},\dots,x_{n,i_n})).$$

The paper by~\cite{TG} describes explicitly several polytopes associated to a neural network: for every $k$ there is a polytope associated to the first $k$-layers of the neural network.
In our setting, the polytopes of the neural networks $\{P_j(x_{1,1},\dots,x_{j,i_j})\}$ are transformed into the polytope of $Q(P_1(x_{1,1},\dots,x_{1,i_1}),\dots,P_n(x_{n,1},\dots,x_{n,i_n}))$ by means of a polytope operation that depends on $Q$.

Currently, we can only formalize the previous steps for poset neural networks. We describe poset operations on tropical polynomials in Definition~\ref{TPA}, and from Equation~\eqref{def:frac} we can associate to any pair of posets $P,Q$ a tropical rational function and its corresponding neural network. However, the extension to all IVNN is still open.
The difficulty lies on the fact that more than one IVNN can be associated to a tropical polynomial.

\section{Poset Pooling Filters}
\label{Sec:CFilters}

Using the results of Section \ref{F:section}, we proceed to define a new family of pooling filters. The max pooling operation, introduced by~\cite{max}, returns the maximum value in a certain region, but is well known to cause a loss of information, specifically when there are several large values, as the pooling only selects one and ignores the other values. The average pooling filter, a second popular filter given by~\cite{average}, takes the average of inputs in a certain region, but has as a drawback that if one value is large and the remaining ones are small, the output is close to the larger value divided by four, which in some situations may not be desirable. To address these and other issues, a third filter was suggested by~\cite{Mixedpool}: mixed pooling, in which average pooling and max pooling are applied randomly.

Now, looking at the backpropagation step for the average pooling filter, the gradient from the next layer is equally distributed among all entries entering the pooling layer (irrespective of whether they were zero or nonzero during the forward pass). In the case of the max pooling, if there are several entries that achieve the maximum or are close, max pooling passes the gradient to only one of them. Mixed pooling then either equally distributes the gradient or may not pass the gradient to relevant entries.

We propose a family of filters that are more precise during the backpropagation step.
\begin{definition}
For any poset $P$ with four points, we write its tropical polynomial in its simplified form (using $x\oplus x=x$). This expression represents a function which is a combination of the function $\max$ evaluated on a linear combination of variables with coefficients 0 or 1. We call this function the \textnormal{poset filter} of $P$.    
\end{definition}

\begin{definition}
For a given input $\begin{bmatrix}
   a_{0,0} & a_{0,1} \\
   a_{1,0} & a_{1,1} \\
  \end{bmatrix}$, the corresponding poset filter of the disjoint union of four points, whose order polytope is the cube, computes all possible sums out of the four inputs, as seen in Equation~\eqref{eqn:cube2}. 
\begin{equation}
\max\left\{
\begin{aligned}
   & \ \ \ \ 0, \\
   & \ \ \ \max_{i,j}\{a_{i,j}\}, \\
   & \max_{\substack{i,j,k,l\\(i,j)\neq (k,l)} }\{a_{i,j}+a_{k,l}\}, \\
   & \max_{\substack{i,j,k,l,m,n\\
(i,j)\neq (k,l)\\(i,j)\neq (m,n)\\(k,l)\neq (m,n)
}}\{a_{i,j}+a_{k,l}+a_{m,n}\}, \\
& \ \ \ \ a_{0,0}+a_{1,0}+a_{0,1}+a_{1,1}\\
\end{aligned} 
\right\}.
\label{eqn:cube2}
\end{equation}
\end{definition}

\begin{remark}
Let us have inputs of the form $a_{00}=-1$, $a_{01}=0$, $a_{10}=1.9$, and $a_{11}=2$ and assume that the output of the poset filter is $0a_{00}+0a_{01}+1a_{10}+1a_{11}$. During backpropagation, the incoming gradient is propagated back to the input values, with the gradient being scaled by the weights $(0,0,1,1)$. Notably, if multiple input entries achieve the maximum value, they all contribute to passing the gradient. This behavior differs from average pooling, where the gradient is evenly spread, and from max pooling, where only the maximum entry (in this case $a_{11}$) receives the gradient.

\end{remark}

An immediate drawback of the filter associated with the disjoint union of points is the number of operations. 
We consider then the filter associated to the four chain and the filter of the poset $\pcauset[alt={x<y>w<z}]{2, 4, 1, 3}$, from Table~\ref{table:ex-PA_}.

\begin{definition}
For a given input $\begin{bmatrix}
   a_{0,0} & a_{0,1} \\
   a_{1,0} & a_{1,1} \\
  \end{bmatrix}$, the corresponding filter of the four chain, whose order polytope is the four-simplex, is as follows: 
\begin{equation}
  \max\{0,\ a_{1,1},\ a_{1,1}+a_{1,0},\ a_{1,1}+a_{1,0}+a_{0,1},\ a_{1,1}+a_{1,0}+a_{0,1}+a_{0,0}\},  \label{eqn:simplex2}
\end{equation}
\end{definition}

\begin{definition}
For a given input $\begin{bmatrix}
   a_{0,0} & a_{0,1} \\
   a_{1,0} & a_{1,1} \\
  \end{bmatrix}$, the corresponding filter of the poset $\pcauset[alt={x<y>w<z}]{2, 4, 1, 3}$ is defined as follows:
\begin{equation}\max\left\{\begin{array}{c}
   0,\ a_{1,1},\ a_{1,0},\ a_{1,0}+a_{1,1},\ a_{0,0}+a_{1,0},\  a_{1,0}+a_{0,1}+a_{1,1},\\
   a_{0,0}+a_{1,0}+a_{1,1},\ a_{0,0}+a_{0,1}+a_{1,0}+a_{1,1}\\
  \end{array} \right\}.\label{eqn:N}\end{equation}
\end{definition}

Our experiments, detailed in Section~\ref{exp:q}, show that the simplex filter (or other $4$-point posets) can be used instead of the filter of the disjoint union of points with similar performance and fewer computations.

We also performed experiments with random vectors. Table~\ref{table:exp111} shows the results of experiments when training a neural network with random vectors instead of vectors given by posets. Although the average precision is lower than our best result, for this specific neural network the accuracy when using random vectors is still higher than the accuracy with average pooling, mixed pooling, and max pooling.

\section{Review of experimental results}\label{Sec:rev}
\subsection{Quaternion Neural Network}\label{exp:q}
With respect to the CIFAR10 dataset, we have first tested on a quaternion convolutional neural network. We aimed to conduct experiments by placing the poset filters in different locations within the convolutional part of the algorithm, and the quaternion architecture was convenient for this purpose, as the inputs at different layers have dimensions divisible by $2$, which simplified the early experiments. We worked on the architecture given in \url{https://github.com/JorisWeeda/Quaternion-Convolutional-Neural-Networks/tree/main} (called QCNN (Weeda impl.) on the results' tables), due to it being an implementation on Pytorch that uses the correct weight initialization that appears on the work of~\cite{oQCNN}.

In our experiments (Section~\ref{Sec:exp}), we have evaluated testing accuracy after placing the poset convolutional filter at several positions. Our best result (see Table~\ref{table:exp14}) was obtained by placing the $\pcauset[alt={x<y>w<z}]{2, 4, 1, 3}$ poset filter between the last two convolutions, just after a ReLU activation. We reduced the number of trainable parameters from $2,032,650$ to $656,394$, with a testing accuracy average of $78.92\%$, compared to the reported original testing accuracy average of $78.14\%$. This average was taken over 14 independent trainings. We also note that when reproducing the Weeda implementation, we got an average of $77.83\%$ under the same conditions as in our test (mainly, the addition of reproducibility seeds). The performance of our poset filter is superior to the alternative architecture obtained by replacing it with max pooling ($76.75\%$), average pooling ($76.097\%$) and mixed pooling ($76.41\%$). 
We report similar performance during cross-validation in Table~\ref{table:exp16}.

During these experiments, we were concerned that the presence of a ReLU function before the filters would affect the performance. However, we found that poset filters perform similarly whether they are preceded by a ReLU or not. We propose the following explanation: negative values decrease the value of the average.  When we evaluate the poset filter, it takes into account those negative values and, according to the poset, the exact linear combination that only adds positive values may not be part of the expression of the poset filter. By pre-filtering with a ReLU, the output value of the poset filter is possibly bigger. 

The poset disjoint union of points, with order polytope the ``cube'', has $16$ terms, making it slow. We run an experiment that evaluates all $16$ possible posets with four points. The results of Table~\ref{table:exp110} show that the accuracy does not differ too much, so the reader can choose to use, instead of the disjoint union poset, the $\pcauset[alt={x<y>w<z}]{2, 4, 1, 3}$ poset or the $4$ chain poset filter, with less computational time.

\subsection{CNN}
With respect to the Fashion MNIST dataset, reported in Section~\ref{Sec:ex3}, we tested on the implementation available at \url{https://github.com/Abhi-H/CNN-with-Fashion-MNIST-dataset}, which uses a standard convolutional neural network with ELU (\cite{ELU}) as the non-linearity and two max pooling functions. The original network achieves $90.65\%$ accuracy. Our best result, obtained by the network that replaces the second max pooling by the $\pcauset[alt={x<y>w<z}]{2, 4, 1, 3}$ filter, was an accuracy of $91.84\%$, but among $14$ independent training runs the average accuracy was $91.26\%$ (see Table~\ref{table:expCNN}).

\subsection{SimpleNet}
In Section~\ref{sec:simp}, we report experiments with SimpleNet, introduced by~\cite{simple}. We trained the architecture on a subset of the dataset ImageNet created by~\cite{imagenet}. Following~\cite{imagenet100pytorch}, we randomly selected 100 classes of Imagenet. 

 Since, unfortunately, adding a filter on the convolutional section would require major changes to the architecture, we chose then to replace the different instances of max pooling in the architecture by the $\pcauset[alt={x<y>w<z}]{2, 4, 1, 3}$ poset filter.

ImageNet's test set labels are not public. To compare the performance, we computed the validation accuracy at the time in which the validation loss is the lowest.
 Due to the substantial computational resources required, we report the average validation accuracy over $6$ repetitions for each architecture, instead of the originally planned $14$. Additionally, we do not include experiments with max pooling, average pooling, and mix pooling in this submission. These experiments are still in progress, and we plan to share the full results once they are complete.

\subsection{DenseNet}
  We implemented DenseNet (\cite{DenseNET_p}) with crop production set to $32\times 32$. We performed experiments on Fashion MNIST, noting that the reported performance of DenseNet on it is $95.4\%$. In our implementation of DenseNet we obtained an average performance of $95.2593\%$.

 Due to similar constraints as in the SimpleNet case, we chose to replace  different instances of average pooling in the architecture by the $\pcauset[alt={x<y>w<z}]{2, 4, 1, 3}$ poset filter. Our best result returned an average accuracy of $95.28\%$ with std $0.13$ over $14$ repetitions. While we obtained a higher accuracy than the original architecture, replacing the $\pcauset[alt={x<y>w<z}]{2, 4, 1, 3}$ poset filter by  max pooling increased the accuracy even more. 
 See Section~\ref{sec:Dense} for more details.

We also tested DenseNet with CIFAR10 and CIFAR100, but in all cases the accuracy did not improve significantly when substituting the existing pooling methods in the architecture with the $\pcauset[alt={x<y>w<z}]{2, 4, 1, 3}$ poset filter.

\subsection{Future experiments}
\label{Sec:conc}
We showed that for certain datasets and architectures, poset filters outperform average pooling, max pooling, and mixed pooling. We will continue our experiments with other possible architectures, tasks, and datasets.

\section{Operadic order theory and machine learning}\label{Sec:end}

In Definition~\ref{Def:NN} we introduced a set of IVNN parameterized by posets. In this section, we lift the structure of an algebra over an operad of posets from order polytopes to poset neural networks. This implies that we have a set of neural networks indexed by arbitrary finite posets and 
a family of associative operations on those neural networks indexed by posets.

\subsection{The operad of posets}

We first study a family of poset endomorphisms, that is, functions of the type $\text{posets}^n\rightarrow \text{posets}$.

\begin{definition}
    Let $P=\{x_1,\dots,x_n|<_P\}$ be a poset with $n$ points. The \textnormal{lexicographic sum} of $n$ posets $P_1,\dots, P_n$ along $P$ is defined as the poset $P(P_1,\dots, P_n)$, with points $ \cup P_i $ and 
    $$ x<_{P(P_1,\dots, P_n)}y\hbox{ if }\begin{cases}
          x,y\in P_i&\hbox{and } x<_{P_i}y, 
        \\
          x\in P_i, y\in P_j& \hbox{and } x_i<x_j.
    \end{cases} $$
\end{definition}

\begin{remark}
 The disjoint union of posets $P_1\sqcup P_2$ is the lexicogaphic sum along $\pcauset[alt={x,y}]{2, 1}$, and the ordinal sum, or concatenation, $P_1*P_2$ is the lexicographic sum along $\pcauset[alt={\hbox{disjoint union of two points}}]{1,2}$.
   
\end{remark}

 \begin{definition}
A \textnormal{series parallel poset} is generated by a point $\chain[1]=\pcauset[alt={1-\hbox{chain} }]{1}$ under the operations of disjoint union $\pcauset[alt={x,y}]{2, 1}$ and concatenation $\pcauset[alt={2-\hbox{chain} }]{1, 2}$.
Under the operation $\pcauset[alt={x,y}]{2, 1}=\sqcup$, series parallel posets form a commutative monoid with the following cancelation property: $P\sqcup Q=R\sqcup Q$ implies $P=R$ .
     
 \end{definition}

\begin{example}
  Examples of series parallel posets are any chain $\pcauset[alt={\hbox{disjoint union of two points}}]{1,2}=\pcauset[alt={\hbox{disjoint union of two points}}]{1,2}(\pcauset[alt={1-\hbox{chain}}]{1},\pcauset[alt={1-\hbox{chain}}]{1})$, and any tree $\pcauset[alt={x<y,x<z}]{1,3,2}=\pcauset[alt={\hbox{disjoint union of two points}}]{1,2}(\pcauset[alt={1-\hbox{chain}}]{1},\pcauset[alt={x,y}]{2, 1})$.
  \end{example}

\begin{definition}
Given a poset that is not a linear order, we say that it is \textnormal{indecomposable} if the only lexicographic sum the poset admits are the trivial lexicographic sums $P=\pcauset[alt={1-\hbox{chain}}]{1}(P)$ or $P=P(\pcauset[alt={1-\hbox{chain}}]{1},\dots,\pcauset[alt={1-\hbox{chain}}]{1})$. A \textnormal{factorization} for a poset is a decomposition into lexicographic sums evaluated on indecomposable posets. 
\end{definition}

\begin{example}
A factorization of $\pcauset[alt={x<y, x<z }]{1, 3, 2}$ is $\pcauset[alt={\hbox{disjoint union of two points}}]{1,2}(\pcauset[alt={1-\hbox{chain}}]{1}, \pcauset[alt={x,y}]{2, 1}(\pcauset[alt={1-\hbox{chain}}]{1},\pcauset[alt={1-\hbox{chain}}]{1}))$.
\end{example}

Note that linear orders admit more than one factorization.

\subsection{Endomorphisms of polytopes}\label{end:poly}
Let $P_1,\dots, P_n$ be posets. To define operations on the order polytopes $\text{Poly}(P_1),$ $\dots,$ $\text{Poly}(P_n)$, we work in $[0,1]^{\sum_{i=1}^n |P_i|}$ and assume that each $\text{Poly}(P_i)$ is embedded as $\overrightarrow{0}^{\sum_{j=1}^{i-1} |P_j|}\times \text{Poly}(P_i)\times \overrightarrow{0}^{\sum_{k=i+1}^n |P_k|}$.

\begin{definition}
At the level of polytopes, the \textnormal{Minkowski sum} takes $\mathrm{Poly}(P)$ and $\mathrm{Poly}(Q)$ and returns: \[\mathrm{Poly}(P)+ \mathrm{Poly}(Q)=\{x+y,\ x\in \mathrm{Poly}(P), \ y\in \mathrm{Poly}(Q)\}.\]
\end{definition}

\begin{lemma}\label{L:M} For two posets $P$ and $Q$ and their order polytopes $\mathrm{Poly}(P)$ and $\mathrm{Poly}(Q)$,
    \[\mathrm{Poly}(\pcauset[alt={x,y}]{2, 1}(P,Q))=\mathrm{Poly}(P)+ \mathrm{Poly}(Q).\]
\end{lemma}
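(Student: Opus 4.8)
The plan is to show that both sides of the claimed identity coincide with the Cartesian product $\mathrm{Poly}(P)\times\mathrm{Poly}(Q)$ sitting inside the cube $[0,1]^{|P|+|Q|}$. Writing $m=|P|$ and $n=|Q|$, recall that the lexicographic sum $\pcauset[alt={x,y}]{2, 1}(P,Q)$ is exactly the disjoint union $P\sqcup Q$: its underlying set is $P\cup Q$, and the only comparabilities are those already present inside $P$ or inside $Q$, with no point of $P$ comparable to a point of $Q$.

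First I would identify $\mathrm{Poly}(P\sqcup Q)$ with this product. Using the function description of the order polytope, a map $h:P\sqcup Q\to[0,1]$ is order-preserving precisely when its restriction $h|_P$ is order-preserving on $P$ and its restriction $h|_Q$ is order-preserving on $Q$, since the disjoint union introduces no new inequalities to check between the two blocks. Splitting the coordinates of $[0,1]^{m+n}$ into the first $m$ (indexed by $P$) and the last $n$ (indexed by $Q$), this says exactly that $(f,g)\in\mathrm{Poly}(P\sqcup Q)$ if and only if $f\in\mathrm{Poly}(P)$ and $g\in\mathrm{Poly}(Q)$; hence $\mathrm{Poly}(P\sqcup Q)=\mathrm{Poly}(P)\times\mathrm{Poly}(Q)$.

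Next I would check that the Minkowski sum, taken with the embedding convention of Section~\ref{end:poly}, produces the same product. Under that convention an element of $\mathrm{Poly}(P)$ is embedded as $(f,\overrightarrow{0})$ with $\overrightarrow{0}\in[0,1]^n$, and an element of $\mathrm{Poly}(Q)$ as $(\overrightarrow{0},g)$ with $\overrightarrow{0}\in[0,1]^m$. Their sum is $(f,\overrightarrow{0})+(\overrightarrow{0},g)=(f,g)$, which lies in $\mathrm{Poly}(P)\times\mathrm{Poly}(Q)$; conversely every $(f,g)$ in the product decomposes as precisely such a sum. This double inclusion gives $\mathrm{Poly}(P)+\mathrm{Poly}(Q)=\mathrm{Poly}(P)\times\mathrm{Poly}(Q)$, and combining with the previous paragraph yields the lemma.

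The computations here are routine, so the only point that requires care is the bookkeeping of the embedding: the two summand polytopes are placed in complementary coordinate subspaces (the first $m$ versus the last $n$ coordinates), so their Minkowski sum behaves as a direct sum and fills out the whole product rather than collapsing onto a lower-dimensional overlap. I would emphasize this complementarity as the crux of the argument. That $\mathrm{Poly}(P)+\mathrm{Poly}(Q)$ is again a genuine convex order polytope need not be re-established, as it is guaranteed by Lemma~\ref{L:Convp} and Lemma~\ref{Lemma:1} applied to $P\sqcup Q$.
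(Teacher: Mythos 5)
Your proof is correct and takes essentially the same route as the paper's: both identify $\mathrm{Poly}(P\sqcup Q)$ with the Cartesian product $\mathrm{Poly}(P)\times\mathrm{Poly}(Q)$ (since the disjoint union imposes no relations between the two blocks) and then observe that the Minkowski sum of the two polytopes, embedded in complementary coordinate subspaces, fills out exactly that product. Your version merely spells out the double inclusion that the paper leaves implicit.
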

\begin{proof}
Given two posets $P$ and $Q$, $P\sqcup Q$ is the poset with no new order relations.
Then, $\text{Poly}(P\sqcup Q)$ has as many orthogonal coordinates as the points in $P$ and $Q$, and the relations of $P$ do not interfere with those of $Q$. Thus, the points of $\text{Poly}(P,Q)$ are of the form $\{(x,y)|\ x\in \text{Poly}(P), \ y\in \text{Poly}(Q)\}$ and $\text{Poly}(P,Q)=\text{Poly}(P)\times \overrightarrow{0}^{|Q|}+\overrightarrow{0}^{|P|}\times \text{Poly}(Q)$, where the sum of polytopes is given by the Minkowski sum.

\end{proof}
Another operation on polytopes is the convex envelope.

\begin{definition}
The \textnormal{convex envelope} of $\mathrm{Poly}(P)$ and $\mathrm{Poly}(Q)$, or $C(\mathrm{Poly}(P),\\ \mathrm{Poly}(Q))$, is defined as:
$$C(\mathrm{Poly}(P),\mathrm{Poly}(Q))=\{ax+by\ |\ x\in \mathrm{Poly}(P), y\in \mathrm{Poly}(Q), a+b=1, a,b\geq 0\}.$$
\end{definition} 
\begin{lemma}\label{L:C}Given two posets $P,Q$, the action of $\pcauset[alt={2-\hbox{chain} }]{1,2}$ on order polytopes satisfies:
    $$\mathrm{Poly}(\pcauset[alt={2-\hbox{chain} }]{1,2}(P,Q))=C(i(\mathrm{Poly}(P)),\mathrm{Poly}(Q)),$$ where 
    $i(\mathrm{Poly}(P))=\mathrm{Poly}(P)\times \overrightarrow{1}^{|Q|}$.
\end{lemma}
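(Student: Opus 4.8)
The plan is to unwind the definitions on both sides and show the two sets coincide, using the description of the lexicographic sum along a $2$-chain. Recall that $\pcauset[alt={2-\hbox{chain} }]{1,2}(P,Q)$ is the ordinal sum $P*Q$: it has point set $P\cup Q$, retains all order relations internal to $P$ and to $Q$, and in addition declares $p <_{P*Q} q$ for every $p\in P$ and every $q\in Q$. By the functional description of the order polytope, a point of $\mathrm{Poly}(P*Q)$ is a map $f\colon P\cup Q\to[0,1]$ that is monotone on $P$, monotone on $Q$, and satisfies $f(p)\le f(q)$ for all $p\in P$, $q\in Q$. The content of the lemma is that this set equals the convex envelope $C(i(\mathrm{Poly}(P)),\mathrm{Poly}(Q))$, where $i$ lifts $\mathrm{Poly}(P)$ to sit at height one in the $Q$-coordinates.

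First I would prove the inclusion $C(i(\mathrm{Poly}(P)),\mathrm{Poly}(Q))\subseteq \mathrm{Poly}(P*Q)$. Take a point $a\,i(x)+b\,y$ with $x\in\mathrm{Poly}(P)$, $y\in\mathrm{Poly}(Q)$, $a+b=1$, $a,b\ge 0$. On the $P$-coordinates the point equals $a\,x$ (since $y$ vanishes there), and on the $Q$-coordinates it equals $a\cdot\overrightarrow{1}+b\,y$. The $P$-coordinates are monotone because $x$ is and scaling by $a\ge 0$ preserves order; likewise the $Q$-coordinates are monotone. For the cross relations $f(p)\le f(q)$, note every $P$-coordinate is at most $a$ (as $x\in[0,1]^{|P|}$), while every $Q$-coordinate is $a + b\,y_q \ge a$ (as $b,y_q\ge0$). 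Hence the cross inequalities hold and the point lies in $\mathrm{Poly}(P*Q)$.

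For the reverse inclusion I would take an arbitrary $f\in\mathrm{Poly}(P*Q)$ and exhibit it as such a convex combination. Let $a=\max_{p\in P} f(p)$; since all $P$-values are $\le$ all $Q$-values, we have $a\le f(q)$ for every $q$, and $a\in[0,1]$. If $a=0$ then $f$ is supported on $Q$ and lies in $i(\mathrm{Poly}(P))$'s degenerate partner with weight $b=1$; the generic case is $a>0$. Then set $b=1-a$ and define $x_p=f(p)/a$ on $P$ (monotone, in $[0,1]$ since $f(p)\le a$) and, on $Q$, $y_q=(f(q)-a)/b$ when $b>0$. Monotonicity of $y$ follows from that of $f$ on $Q$, and $y_q\ge 0$ because $f(q)\ge a$; one checks $y_q\le 1$ from $f(q)\le 1$ and $a\le f(q)$. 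Reassembling gives $f = a\,i(x)+b\,y$, as required. The case $b=0$ (i.e.\ $a=1$) must be handled separately, with $f$ lying in $i(\mathrm{Poly}(P))$ directly.

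The main obstacle is the careful bookkeeping of the degenerate boundary cases $a\in\{0,1\}$ and the verification that the recovered $y$ genuinely lands in $\mathrm{Poly}(Q)\subseteq[0,1]^{|Q|}$ rather than merely being monotone; the inequality $y_q\le 1$ relies precisely on $a\le f(q)\le 1$, which is where the ordinal-sum cross relations and the cube constraint interact. Everything else is a routine check that scaling and translation by the all-ones vector preserve the poset inequalities defining the order polytopes.
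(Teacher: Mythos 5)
Your proof is correct, and it is strictly more complete than the paper's own argument. Your first inclusion, $C(i(\mathrm{Poly}(P)),\mathrm{Poly}(Q))\subseteq \mathrm{Poly}(P*Q)$, is essentially identical to what the paper does: the paper checks that a convex combination $\alpha(p,\overrightarrow{1})+(1-\alpha)(\overrightarrow{0},q)$ satisfies the internal inequalities of $P$ and $Q$ (scaling and translation preserve them) and the cross inequalities via $\alpha p_i\leq \alpha\leq \alpha+(1-\alpha)q_j$. However, the paper stops there and simply asserts that this ``means'' the convex envelope equals the order polytope; the reverse containment $\mathrm{Poly}(P*Q)\subseteq C(i(\mathrm{Poly}(P)),\mathrm{Poly}(Q))$ is never argued. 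Your second half supplies exactly this missing step, and does so cleanly: given $f\in\mathrm{Poly}(P*Q)$, setting $a=\max_{p\in P}f(p)$ and recovering $x_p=f(p)/a$, $y_q=(f(q)-a)/(1-a)$ is the right normalization, and your verification that $y_q\in[0,1]$ (using both $f(q)\geq a$ from the cross relations and $f(q)\leq 1$ from the cube constraint) together with the degenerate cases $a\in\{0,1\}$ makes the equality a genuine two-sided set identity rather than an assertion. In short: same geometric idea as the paper for the easy direction, plus an explicit convex-decomposition argument for the direction the paper leaves implicit.
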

\begin{proof}
Consider the convex envelope of $\overrightarrow{0}^{|P|}\times \text{Poly}(Q)$ and $\text{Poly}(P)\times \overrightarrow{1}^{|Q|}$. 
We claim that every point in the line $\{\alpha(p,1)+(1-\alpha)(0,q)\ |\ 0\leq \alpha\leq 1\}$ satisfies the inequalities of the poset and that any of the first coordinates is smaller than any of the second coordinates.

For any such $\alpha$, the coordinates of $(1-\alpha) q$ and $\alpha p$ satisfy the inequalities of $\text{Poly}(Q)$ and $\text{Poly}(P)$, respectively, because we multiply every entry by a constant. Similarly, $\alpha+(1-\alpha) q$ preserves the inequalities entry-wise after translation and dilation.

Now, we have $\alpha p_i\leq \alpha\leq  \alpha+(1-\alpha)q_j$ for all $i,j$ corresponding indices. This means that the convex envelope of
$\overrightarrow{0}^{|P|}\times \text{Poly}(Q)$ and $\text{Poly}(P)\times \overrightarrow{1}^{|Q|}$ is the order polytope of $\pcauset[alt={2-\hbox{chain} }]{1,2}(P,Q)$.

    \end{proof}
    
\begin{definition}
An \textnormal{operad} is a sequence of sets $\{O(n)\}_{n\in\mathbb{N}}$, conceptualized as a set of $n$ -ary operations. It is equipped with composition morphisms $O(n)\times O(k_1)\times\cdots\times O(k_n)\rightarrow O(\sum_{i=1}^n k_i)$, which are associative.  There is an element in $O(1)$ which behaves as the unit.   
\end{definition}
A first introduction to operad theory can be found in the work of \cite{what} and \cite{whatis}. For a more detailed introduction to operads, we recommend~\cite{entropy}.

\begin{definition}
    The \textnormal{operad of finite posets} has as $n$-ary operations the posets with $n$ points, with composition given by the lexicogaphic sum. The one-point poset \pcauset[alt={1-\hbox{chain}}]{1} acts as an identity operation. 
\end{definition}
Note that $\pcauset[alt={x,y}]{2, 1}$ represents an associative and commutative operation, since the union of posets satisfies $P\sqcup Q=Q\sqcup P$.

Given a set $X$, the operad $End_X$ has as $n$-ary operations the functions $f:X^n\rightarrow X$, and the operadic composition is the composition of functions.
\begin{definition}
    An \textnormal{algebra} $A$ over the operad of posets $O$ is an operadic morphism $O\rightarrow End_A$.
\end{definition}

In other words, if $A$ is an algebra over an operad ${O}$, we realize the abstract $n$-ary operations $\mu\in{O}(n)$ as functions  $\mu:A^n\mapsto A$.

\begin{remark} Posets are an algebra over the operad of posets, where the action is the lexicographic sum of posets.
    \end{remark}

\begin{definition}
By $\chain[n]\circ_i \chain[m]$, we mean the action of $\chain[n]$ on the input $$(\chain[1],\dots,\chain[1],\chain[m],\chain[1],\dots,\chain[1])$$ where the entry $\chain[m]$ is located in the $i$-th position.    
\end{definition}

The language of category theory (operads) provides a framework that enables the application of techniques from order theory in other fields. We will now define the structure of algebra over the operad of posets on order polynomials, tropical polynomials, and a family of neural networks.

\begin{definition} 
Given a finite poset $P$ with $|P|=n$, and input posets $Q_1,\dots,Q_n$, we define the \textnormal{action of} $P$ \textnormal{on the order polytopes} of the input posets by:

$$P(\mathrm{Poly}(Q_1),\dots,\mathrm{Poly}(Q_n)):=\mathrm{Poly}(P(Q_1,\dots,Q_n)).$$
    
\end{definition}
Corollary~\ref{Cor:polytope} implies that the structure of an algebra over the operad of posets of order polytopes is well-defined because every order polytope is associated to a unique poset.
This way, we see posets parametrizing operations on order polytopes.

\begin{remark}   
We have seen that $\pcauset[alt={x,y}]{2, 1}$ acts as the Minkowski sum of order polytopes (see Lemma~\ref{L:M}), and $\pcauset[alt={2-\hbox{chain} }]{1, 2}$ is isomorphic to the convex envelope of order polytopes (see Lemma~\ref{L:C}).

The simplest non-series parallel poset is $P=\pcauset[alt={x<y>w<z}]{2, 4, 1, 3}$. Thus, the action of $\pcauset[alt={x<y>w<z}]{2, 4, 1, 3}$ is a generalization of the Minkowski sum and the convex envelope. The Minkowski sum is symmetric on its inputs, but for order polytopes $ C_1,C_2,C_3$ and $C_4$,  $\pcauset[alt={x<y>w<z}]{2, 4, 1, 3}(C_1,C_2,C_3,C_4)$ is in general not isomorphic to $\pcauset[alt={x<y>w<z}]{2, 4, 1, 3}(C_2,C_1,C_3,C_4)$. 

\end{remark}

\subsection{Tropical polynomials/neural networks of posets}
\label{end:nn}
In this section, when referring to the tropical polynomial of a poset $P$ we mean its expanded presentation (see Definition~\ref{def:exp}). 
We can associate to every order tropical polynomial a polytope, and we show that this association lifts the action of the operad of posets from order polynomials to tropical polynomials.

The following definition is justified by Lemma~\ref{Lemma:dist}.

\begin{definition}
Let $P$ be a poset with $n$ points and let $Q_1,\dots,Q_n$ be $n$ posets. We define the \textnormal{action of the poset} $P$ \textnormal{on the tropical polynomials} $\mathrm{Tr}(Q_1),\dots,\mathrm{Tr}(Q_n)$  by:  $$P(\mathrm{Tr}(Q_1),\dots,\mathrm{Tr}(Q_n)):=\mathrm{Tr}(P(Q_1,\dots,Q_n)).$$
    
\end{definition}

\begin{remark}
Note that the action of a chain $\chain[n]$ on the tropical polynomial of a chain $\chain[m]$ can be computed by evaluating $\mathrm{Tr}(\chain[n])$ on the polynomial $\mathrm{Tr}(\chain[m])$, as follows: 
    \begin{eqnarray}
        \chain[n](\mathrm{Tr}(1),\dots,\mathrm{Tr}(1),\mathrm{Tr}(m),\mathrm{Tr}(1),\dots,\mathrm{Tr}(1))&=&\mathrm{Tr}(\chain[n]_{\circ_i}\chain[m])\nonumber\\
        &=&\mathrm{Tr}(n+m-1)\nonumber\\
  &=&0\oplus x_{n+m-1} \otimes \Big( \cdots\otimes (0\oplus x_1)\cdots \Big)\nonumber
    \end{eqnarray}
    
    We used Equation~\eqref{Eqn:simplex} to go from the second to the third line. By expanding, the final value can be written as  $$0\oplus x_{n+m-1} \otimes ( \cdots\otimes (0\oplus x_{i+m}\otimes 
    \Big(0\oplus x_{i+m-1} \otimes ( \cdots\otimes (0\oplus x_i)\cdots \Big) 
    \otimes (0\oplus x_{i-1}\otimes(\cdots 0\oplus x_1)\cdots),$$ 
    which coincides with
    $$\mathrm{Tr}(n)(x_1,\dots,x_{i-1},\mathrm{Tr}(m)(x_i,\dots,x_{i+m-1}),x_{m+i+1},\dots,x_{m+n-1}).$$

\end{remark}

\begin{lemma}\label{Lemma:poly}
    The action of $\chain[n]$ on tropical polynomials of posets is given by evaluation on $\mathrm{Tr}(n)$ and relabeling of some variables.
\end{lemma}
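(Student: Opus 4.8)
The plan is to unfold the definition of the action and then reduce the whole statement to the case where each input poset is a chain, which is exactly the situation handled by the remark immediately preceding this lemma. By the definition of the action on tropical polynomials, $\chain[n](\mathrm{Tr}(Q_1),\dots,\mathrm{Tr}(Q_n))$ is $\mathrm{Tr}(\chain[n](Q_1,\dots,Q_n))$, and the lexicographic sum along $\chain[n]$ is the ordinal sum $Q_1*\cdots*Q_n$. So the content of the lemma is that the expanded tropical polynomial of this ordinal sum is obtained by inserting each $\mathrm{Tr}(Q_i)$ into the $i$-th variable of the skeleton $\mathrm{Tr}(n)$ of Equation~\eqref{Eqn:simplex}, after relabeling the variables of the $\mathrm{Tr}(Q_i)$ so that the $n$ blocks occupy disjoint, contiguous index ranges.

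First I would record the order-theoretic fact that drives the reduction: every linear extension of $Q_1*\cdots*Q_n$ must list all of $Q_1$ below all of $Q_2$, and so on, so a linearization is precisely a choice of a linearization $L_i$ of each $Q_i$ followed by concatenation. Hence the linearizations of the ordinal sum are in bijection with tuples $(L_1,\dots,L_n)$, and in the expanded presentation of Definition~\ref{def:exp} we have
\[
\mathrm{Tr}(Q_1*\cdots*Q_n)=\bigoplus_{(L_1,\dots,L_n)}\mathrm{Tr}(L_1*\cdots*L_n),\qquad \mathrm{Tr}(Q_i)=\bigoplus_{L_i}\mathrm{Tr}(L_i).
\]
Because the insertion into the fixed skeleton $\mathrm{Tr}(n)$ is built only from $\otimes$ and $\oplus$, and $\otimes$ distributes over $\oplus$, this insertion distributes over the tropical sums indexing the linearizations of each block; this is exactly what lets the evaluation commute with the expanded sums and reduces the statement to the case of chains.

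Next I would prove the block-factorization identity for chains: for linear orders $L_1,\dots,L_n$ the nested polynomial $\mathrm{Tr}(L_1*\cdots*L_n)$ of the concatenated chain is $\mathrm{Tr}(n)$ with its $i$-th variable replaced by $\mathrm{Tr}(L_i)$. I would argue by induction on $n$ using the factorization of Equation~\eqref{Eqn:simplex}: peeling off the top block $L_n$ exhibits $\mathrm{Tr}(L_1*\cdots*L_n)$ as the nested polynomial of $L_n$ whose innermost unit is seeded by $\mathrm{Tr}(L_1*\cdots*L_{n-1})$, which is precisely the evaluation carried out in the preceding remark in its lowest-slot instance. The relabeling is then forced, since the elements of $L_i$ must receive the indices lying strictly above those of $L_1,\dots,L_{i-1}$ and strictly below those of $L_{i+1},\dots,L_n$.

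The main obstacle, and the step requiring the most care, is making precise what ``evaluation on $\mathrm{Tr}(n)$'' means in this last step. The composition must respect the nested structure of Equation~\eqref{Eqn:simplex}, inserting each block together with its own internal nesting and chaining it to the block below through the innermost slot; a naive simultaneous substitution of the full polynomials would instead create spurious monomials produced by the leading constant $0$ of each $\mathrm{Tr}(Q_i)$, so the nested form of Equation~\eqref{Eqn:simplex} is essential rather than incidental. Once the correct seeded insertion is identified, the identity for chains is a direct, if notation-heavy, consequence of Equation~\eqref{Eqn:simplex}; combining it with the linearization bijection and with Lemma~\ref{Lemma:dist}, which guarantees that the expanded presentation determines and is determined by the poset, yields the claim for arbitrary input posets.
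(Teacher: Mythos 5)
Your proposal is correct, and it runs on the same combinatorial engine as the paper's proof: linear extensions of a lexicographic sum along a chain are exactly the concatenations of linear extensions of the blocks, so the expanded presentation of $\mathrm{Tr}(\chain[n](Q_1,\dots,Q_n))$ is indexed by tuples $(L_1,\dots,L_n)$. The routes differ in scope and in rigor. The paper's proof treats only the single insertion $\chain[n]\circ_i P$ (all other inputs trivial): it matches the maximal simplices of $\mathrm{Poly}(\chain[n]\circ_i P)$ with linearizations of $P$ decorated by a tail and a head, and then asserts that the resulting coefficients agree with the evaluation of $\mathrm{Tr}(n)$ with $\mathrm{Tr}(P)$ placed in the $i$-th slot. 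You instead treat all $n$ inputs simultaneously, reduce to the chain case by distributing the insertion over the expanded tropical sums (note that this step also uses idempotency $a\oplus a=a$ at the $0\oplus$ levels of the skeleton, not only distributivity of $\otimes$ over $\oplus$), and close the chain case by induction on the nesting of Equation~\eqref{Eqn:simplex}. Most importantly, the obstacle you single out is exactly the point the paper's own proof (and the remark preceding it) glosses over, and it is a genuine one: naive substitution of the full polynomial $\mathrm{Tr}(P)$, with its leading constant $0$, into a slot $i\geq 2$ is false. For instance, $\mathrm{Tr}(2)\bigl(x_1,\mathrm{Tr}(2)(x_2,x_3)\bigr)=0\oplus x_1\oplus x_3\oplus x_1x_3\oplus x_2x_3\oplus x_1x_2x_3$, whose spurious monomials $x_1$ and $x_1x_3$ correspond to the points $(1,0,0)$ and $(1,0,1)$ that violate the inequalities of the $3$-chain; this differs from $\mathrm{Tr}(\chain[2]\circ_2\chain[2])=\mathrm{Tr}(3)=0\oplus x_3\oplus x_2x_3\oplus x_1x_2x_3$ even as a function (evaluate at $(x_1,x_2,x_3)=(1,-5,0)$). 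So ``evaluation on $\mathrm{Tr}(n)$'' is only correct when read as your seeded insertion threaded through the innermost slot, where the inner $0\oplus(\cdot)$ absorbs the leading $0$ by idempotency. Your argument is therefore not merely a different route but the repair that makes the lemma literally true; what the paper's terser argument buys is brevity, at the price of leaving this reading implicit. One small trim: you do not need Lemma~\ref{Lemma:dist} in the final assembly, since Definition~\ref{def:exp} already defines the expanded presentation as the sum over linearizations, which is all your last step uses.
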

\begin{proof}Let $P$ be a poset with $k$ points. The polynomial $\text{Tr}(\chain[n]\circ_i P)$ is the sum of tropical polynomials associated to maximal simplices in $\text{Poly}(\chain[n]\circ_i P)$. The maximal simplices of an order polytope are indexed by linearizations of the poset $\chain[n]\circ_i P$, as seen in~\cite{TwoPP}. 
Linearizations of $\chain[n]\circ_i P$ are linearizations of $P$ with a tail (coming from $\chain[i-1]\subset \chain [n]$) and a head (coming from $\chain[n-(i+1)]\subset \chain[n]$), and the tropical polynomials of these linearizations have the same coefficients as the evaluation $\text{Tr}(n)(x_1,\dots,x_{i-1},\text{Tr}( P),x_{k+i+1},\dots,x_{k+n-1})$, where we relabel the variables of the polynomial $\text{Tr}( P)$.
\end{proof}

Let $e(P)$ be the number of linearizations of the poset $P$.

\begin{lemma}
    Given a finite poset $P$ with $|P|=n$, and $Q_1,\dots, Q_n$ finite posets, we have that $\prod e(Q_i)$ divides $e(P(Q_1,\dots,Q_n))$.  
\end{lemma}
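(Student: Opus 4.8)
The plan is to prove the stronger multiplicative identity
$$e(P(Q_1,\dots,Q_n)) = W \cdot \prod_{i=1}^n e(Q_i),$$
where $W$ is a non-negative integer depending only on $P$ and the cardinalities $|Q_i|$; the divisibility claim is then immediate. The key observation is that in the lexicographic sum, the order relation between an element of $Q_i$ and an element of $Q_j$ (for $i\neq j$) is dictated entirely by whether $x_i<_P x_j$, and is therefore independent of which elements of the blocks are involved. This lets me separate the data of a linear extension into a \emph{between-block} part and a \emph{within-block} part.

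Concretely, I would first set up a bijection. Given a linear extension $L$ of $P(Q_1,\dots,Q_n)$, viewed as a total order on $\cup Q_i$, I extract two pieces of data: (i) the block word $w(L)\in\{1,\dots,n\}^N$, where $N=\sum_i|Q_i|$, recording for each rank $1,\dots,N$ the index $i$ of the block to which the element at that rank belongs; and (ii) the tuple $(L|_{Q_1},\dots,L|_{Q_n})$ of restrictions of $L$ to each block. Each restriction $L|_{Q_i}$ is automatically a linear extension of $Q_i$, since restricting an order-preserving total order to a subposet preserves its relations. I would call a word $w$ \emph{valid} if, whenever $x_i<_P x_j$, every occurrence of $i$ precedes every occurrence of $j$ in $w$; the constraint forced by the lexicographic sum shows that $w(L)$ is always valid.

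Next I would verify that $L\mapsto (w(L),(L|_{Q_i})_i)$ is a bijection from the linear extensions of $P(Q_1,\dots,Q_n)$ onto the product of the set of valid words with $\prod_i\{\text{linear extensions of }Q_i\}$. Injectivity is clear, since $w(L)$ records the block of each rank and the restrictions record the internal order within each block, so $L$ is reconstructible. For surjectivity, given a valid word $w$ and linear extensions $M_i$ of the $Q_i$, I reconstruct a total order $L$ by placing at each rank the appropriate element of the block named by $w$, in the internal order prescribed by the relevant $M_i$, and then check that $L$ is a linear extension: within-block relations hold because each $M_i$ is a linear extension, and cross-block relations hold because $x_i<_P x_j$ forces, by validity of $w$, all of $Q_i$ before all of $Q_j$ (while incomparable blocks impose no relation to violate).

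Since the number of valid words $W$ is a fixed non-negative integer independent of the choice of internal linear extensions, counting over the product yields $e(P(Q_1,\dots,Q_n))=W\cdot\prod_i e(Q_i)$, hence $\prod_i e(Q_i)$ divides $e(P(Q_1,\dots,Q_n))$. The only genuinely delicate step is the surjectivity check, namely confirming that an arbitrary valid interleaving together with arbitrary internal orders always assembles into a legitimate linear extension; but this reduces entirely to the defining property of the lexicographic sum, that inter-block comparabilities depend only on $P$ and not on the elements chosen.
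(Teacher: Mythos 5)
Your proof is correct, and it is genuinely different from what the paper does, for the simple reason that the paper offers no argument at all: its ``proof'' consists of a citation to Lemma~2.3 of \cite{gold}. Your decomposition of a linear extension of the lexicographic sum into a block word together with the tuple of restrictions to the blocks is the standard argument for this kind of product formula, and it delivers strictly more than the stated divisibility: it identifies the quotient explicitly, $e(P(Q_1,\dots,Q_n))=W\cdot\prod_i e(Q_i)$, where $W$ counts the valid interleavings; one can further recognize $W$ as $e(P(\langle |Q_1|\rangle,\dots,\langle |Q_n|\rangle))$, the number of linear extensions of the lexicographic sum of chains, since a chain has a unique linear extension. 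So your route buys a self-contained, quantitative statement, while the paper buys brevity by outsourcing. One bookkeeping point you should make explicit: as written, your definition of ``valid'' word does not require that the letter $i$ occur exactly $|Q_i|$ times, yet both the reconstruction step (which needs precisely $|Q_i|$ slots for block $i$) and the claim that $W$ is a constant independent of the internal extensions presuppose this; restrict the set of valid words to those with content $(|Q_1|,\dots,|Q_n|)$ and the bijection, and hence the divisibility, are airtight.
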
\begin{proof}
    See~Lemma 2.3 of \cite{gold}. 
\end{proof}

The number of linearizations of the lexicographic sum
is a multiple of the number of linearizations of the input. Since $e(P(Q_1,\dots,Q_n))$ and $\prod e(Q_i)$  are in general not equal, which is required in the proof of Lemma~\ref{Lemma:poly},  we cannot expect that the action of a poset on tropical numbers will be given by the evaluation of the tropical polynomial of $P$ on the tropical polynomials of the inputs.

\begin{example}\label{ex:exp}
    For instance, compare the following evaluation of polynomials:
\begin{eqnarray*}
 \mathrm{Tr}(\pcauset[alt={x,y}]{2, 1})(\mathrm{Tr}(\pcauset[alt={2-\hbox{chain} }]{1, 2}),\mathrm{Tr}(\pcauset[alt={1-\hbox{chain} }]{1}))&=&0\oplus x(0\oplus y(0\oplus z))\\ 
 &\oplus& 0\oplus z(0\oplus x(0\oplus y)),
\end{eqnarray*}
with the action of the poset \pcauset[alt={x,y}]{2, 1}:
\begin{eqnarray*}
\pcauset[alt={x,y}]{2, 1}(\mathrm{Tr}(\pcauset[alt={2-\hbox{chain} }]{1, 2}),\mathrm{Tr}(\pcauset[alt={1-\hbox{chain} }]{1}))&=&\mathrm{Tr}(\pcauset[alt={x,y}]{2, 1}(\pcauset[alt={2-\hbox{chain} }]{1, 2},\pcauset[alt={1-\hbox{chain} }]{1}))\\
&=&0\oplus x(0\oplus y(0\oplus z))\\ 
 &\oplus&0\oplus x(0\oplus z(0\oplus y))\\ 
 &\oplus&0\oplus z(0\oplus x(0\oplus y)).
\end{eqnarray*}

\end{example}

In Definition~\ref{Def:NN}, we fixed a representative neural network for each poset.
\label{GeneralNN}

The following action is well defined up to the order of the linearizations on the inception module.

\begin{definition}
We define the \textnormal{action of the operad of posets on poset neural networks} by the rule:
$$P(nn(Q_1),\dots,nn(Q_n))=nn(P(Q_1,\dots,Q_n)).$$    \label{mainDef}
\end{definition}

\begin{example} From Example~\ref{ex:exp}, Table~\ref{fig:algebra1} and  Table~\ref{fig:algebra2}, we obtain
        \begin{eqnarray*}
\pcauset[alt={x,y}]{2, 1}(nn(\pcauset[alt={2-\hbox{chain} }]{1, 2}),nn(\pcauset[alt={1-\hbox{chain} }]{1}))&=&nn(\pcauset[alt={x,y}]{2, 1}(\pcauset[alt={2-\hbox{chain} }]{1, 2},\pcauset[alt={1-\hbox{chain} }]{1}))
\end{eqnarray*}
Explicitly,
$$\pcauset[alt={x,y}]{2, 1}\left({\mathrm{ReLU}_0\left(\begin{bmatrix}
\mathrm{ReLU}_0(x)&
\mathrm{ReLU}_{-\infty}(y)
\end{bmatrix}\begin{bmatrix}
1\\
1\\
\end{bmatrix}\right)},\mathrm{ReLU}_{0}(z)\right)$$
$$={\mathrm{ReLU}_0\left(\mathrm{ReLU}_{(0,-\infty)}\left(\begin{bmatrix}
\mathrm{ReLU}_0(x)&\mathrm{ReLU}_{-\infty}(y)&\mathrm{ReLU}_{-\infty}(z)\\
\mathrm{ReLU}_0(x)&\mathrm{ReLU}_{-\infty}(z)&\mathrm{ReLU}_{-\infty}(y)\\
\mathrm{ReLU}_0(z)&\mathrm{ReLU}_{-\infty}(x)&\mathrm{ReLU}_{-\infty}(y)\\
\end{bmatrix} 
\begin{bmatrix}
1&0\\
1&0\\
0&1
\end{bmatrix}\right)
\begin{bmatrix}
1\\
1\\
\end{bmatrix}\right)
}.$$
    \end{example}

\subsection{Extension to all tropical polynomials}\label{def:troppoly}
We have defined the endomorphisms of a family of neural networks. In this section, we will now extend the action of the operad of posets to convex polytopes. From the extension to convex polytopes, we will define the action of the operad of posets on tropical polynomials. 

 We define the action of the operad of posets on arbitrary points and arbitrary convex polytopes as follows. 
\begin{definition}\label{D:1}
Given $x_1,\dots, x_n\in \mathbb{R}^n$ and $P$, a poset with $n$ points, we define: $$P(x_1,\dots, x_n):=\{(a_1x_1,\dots, a_nx_n)\ |\ (a_1,\dots,a_n)\in \mathrm{Poly}(P)\}.$$
\end{definition}

\begin{definition}\label{D:2}
    
Let $P$ be a poset with $n$ points and let $C_1,\dots,C_n$ be convex polytopes. Then, we define the \textnormal{action of $P$ on convex polytopes} as: $$P(C_1,\dots,C_n):=\sqcup_{\{c_1\in C_1,\dots, c_n\in C_n\}} P(c_1,\dots,c_n).$$
\end{definition}

In particular, if all polytopes $\{C_i\}_{1\leq i\leq n}$ have integer vertices, then the resultant polytope $P(C_1,\dots,C_n)$ has integer vertices. 

Note that, if $P$ is a poset with $|P|=n$ and $I^i=\overrightarrow{0}^{i-1}\times[0,1]\times\overrightarrow{0}^{n-i}$, then $P(I^1,\dots,I^n)=P(\text{Poly}(1),\dots,\text{Poly}(1))=
\text{Poly}(P(1,\dots,1))=\text{Poly}(P)$.

We now show that the action of a poset on convex sets is a convex set.
\begin{lemma}\label{L:Convp}
    Let $P$ be a poset with $n$ points and let $C_1,\dots,C_n$ be convex polytopes. Let the polytope $P(C_1,\dots, C_n)$ have vertices $v_1,\dots, v_r$. Then $C(\{v_1,\dots, v_r\})=P(C_1,\dots, C_n)$, where the polytope $P(C_1,\dots, C_n)$ is given by the action of the operad of posets, and $C$ is the convex envelope.
\end{lemma}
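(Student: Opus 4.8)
The plan is to exploit the fact that, although the map defining the action is not linear, it becomes linear as soon as one of its two groups of arguments is frozen. I write a point of $P(C_1,\dots,C_n)$ as $(a_1c_1,\dots,a_nc_n)$ with $\vec a=(a_1,\dots,a_n)\in\mathrm{Poly}(P)$ and $c_i\in C_i$, each $c_i$ supported on its own block of coordinates as in the orthogonal embedding of Section~\ref{end:poly}. I then record the finite set
$$ S=\{(\alpha_1w_1,\dots,\alpha_nw_n)\ :\ \vec\alpha\in V(\mathrm{Poly}(P)),\ w_i\in V(C_i)\}, $$
where $V(\cdot)$ denotes the vertex set, and the whole proof reduces to showing $P(C_1,\dots,C_n)=C(S)$: the vertices of the left-hand side then form a subset of $S$, and their convex envelope is the whole set, which is exactly the claim.

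First I would prove the inclusion $P(C_1,\dots,C_n)\subseteq C(S)$ by factoring through two linear images of polytopes. For fixed $c=(c_i)$ the assignment $\vec a\mapsto(a_ic_i)_i$ is linear, so it sends $\mathrm{Poly}(P)=C(V(\mathrm{Poly}(P)))$ onto the convex hull of the points $(\alpha_ic_i)_i$ with $\vec\alpha$ a vertex; hence $(a_ic_i)_i=\sum_k\mu_k(\alpha_i^{(k)}c_i)_i$ for some convex weights $\mu_k$. Next, for each fixed vertex $\vec\alpha^{(k)}$ the assignment $(c_i)_i\mapsto(\alpha_i^{(k)}c_i)_i$ is again linear, so it sends the product polytope $\prod_iC_i$ (whose vertices are the tuples of vertices) onto the convex hull of the points $(\alpha_i^{(k)}w_i)_i$ with $w_i\in V(C_i)$. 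Substituting the second expansion into the first writes the original point as a convex combination of elements of $S$.

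For the reverse inclusion I would verify directly that $P(C_1,\dots,C_n)$ is convex, after which $S\subseteq P(C_1,\dots,C_n)$ forces $C(S)\subseteq P(C_1,\dots,C_n)$. Given $(a_ic_i)_i$, $(b_id_i)_i$ in the set and $\lambda\in[0,1]$, I set $e_i=\lambda a_i+(1-\lambda)b_i\in\mathrm{Poly}(P)$ (the order polytope is convex, being cut out by linear inequalities). When $e_i>0$ the $i$-th block $\lambda a_ic_i+(1-\lambda)b_id_i$ equals $e_if_i$ for $f_i=(\lambda a_i/e_i)c_i+((1-\lambda)b_i/e_i)d_i\in C_i$; when $e_i=0$ the nonnegativity of order-polytope coordinates forces $a_i=b_i=0$, so the block is $0=e_if_i$ for any $f_i\in C_i$. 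Hence the convex combination lies in $P(C_1,\dots,C_n)$, and combining both inclusions yields $P(C_1,\dots,C_n)=C(S)$.

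The step I expect to be the genuine obstacle is handling the \textbf{non-linearity} (in fact bilinearity) of the defining map $(\vec a,(c_i))\mapsto(a_ic_i)$: the image of a product of polytopes under a bilinear map need not be convex, so neither convexity nor polytopality is automatic. The resolution is precisely the two-stage factoring above, which never invokes bilinearity in both arguments at once, together with the small but essential bookkeeping of the degenerate case $e_i=0$ in the convexity argument, where one must use that the coordinates of an order polytope are nonnegative.
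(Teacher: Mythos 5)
Your proof is correct, and it in fact does more than the paper's own argument. The paper's proof of Lemma~\ref{L:Convp} consists solely of the convexity argument: it takes $x\in P(r_1,\dots,r_n)$, $y\in P(s_1,\dots,s_n)$ and asserts that $tx+(1-t)y$ lies in $P\bigl(tr_1+(1-t)s_1,\dots,tr_n+(1-t)s_n\bigr)$, "since each $tr_i+(1-t)s_i\in C_i$". As written, that witness is not correct: writing $x=(a_ir_i)_i$ and $y=(b_is_i)_i$, the block $ta_ir_i+(1-t)b_is_i$ is a scalar multiple of $tr_i+(1-t)s_i$ only when $a_i=b_i$ or $r_i,s_i$ are parallel (already for $n=1$, with $a_1=1$, $b_1=0$ and $r_1,s_1$ independent, the midpoint $\tfrac12 r_1$ is not a multiple of $\tfrac12(r_1+s_1)$). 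Your renormalization $f_i=(\lambda a_i/e_i)c_i+((1-\lambda)b_i/e_i)d_i$, with the separate treatment of the degenerate case $e_i=0$ via nonnegativity of the order-polytope coordinates, is precisely the repair that makes this segment argument rigorous; in that sense your third paragraph is the corrected form of the paper's proof. The rest of your argument---the two-stage factorization through the linear maps $\vec a\mapsto(a_ic_i)_i$ and $(c_i)_i\mapsto(\alpha_ic_i)_i$, giving $P(C_1,\dots,C_n)=C(S)$ for the explicit finite set $S$ built from $V(\mathrm{Poly}(P))$ and $\prod_i V(C_i)$---has no counterpart in the paper, and it is what actually delivers the lemma as stated: the statement presupposes that $P(C_1,\dots,C_n)$ is a polytope with finitely many vertices whose convex envelope recovers it, and convexity alone (which is all the paper proves, concluding only that "the action of posets on convex polytopes preserves convexity") does not yield that finite structure. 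So your proposal both fixes the flawed step in the paper's convexity argument and supplies the polytopality argument the paper omits.
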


\begin{proof}
The polytope $\mathrm{Poly}(P)$ is convex because if the coordinates of $x,y$ satisfy the inequalities of the poset $P$, then the coordinates of $tx+(1-t)y$ satisfy the inequalities as well.

 Given $x, y\in P(C_1,\dots, C_n)$,
 assume $x\in P(r_1,\dots,r_n) $ and $y\in P(s_1,\dots,s_n)$. Then, the line $tx+(1-t)y\in P(t((r_1,\dots,r_n))+(1-t)(s_1,\dots,s_n))\subset P(C_1,\dots, C_n)$ since each $tr_i+(1-t)s_i\in C_i$.

Therefore, the action of posets on convex polytopes preserves convexity.
\end{proof}

\begin{remark}
Note that if the inputs are not convex, then the polytope may not be convex, for example, $\pcauset[alt={2-\hbox{chain} }]{1, 2}\Big(\{(1,0,0),(0,1,0)\}, \{(0,0,1)\}\Big)$ gives two lines with the same end point, but there is no line between $(1,0,0)$ and $(0,1,0)$ in the set.
\end{remark}

\begin{remark}
 Note also that a tropical polynomial is associated with a polytope where all but the last coordinate are guaranteed to be non-negative integers. 
 \end{remark}

We can now define the action of posets on tropical polynomials.

\begin{definition} Let $P$ be a poset with $n$ points, and let $f_1, \dots, f_n$ be $n$ tropical polynomials with their corresponding convex polytopes $C_1, \dots, C_n$.
As polytopes, we compute the action $P(C_1, \dots, C_n)$, obtain the vertices of $P(C_1, \dots, C_n)$, and then these vertices define a tropical polynomial, which we denote by $P(f_1,\dots, f_n)$. \label{TPA}
\end{definition}
This composition sends $n$ tropical polynomials into a tropical polynomial.
\begin{example}\label{ex:ex}
    The order polytope of $\pcauset[alt={x<z, y<z}]{2, 1, 3}=\{\{x,y\}\leq z\}$ is an upside-down pyramid with a quadrangular base.
    Then, the associated order polytope of $\pcauset[alt={x<z, y<z}]{2, 1, 3}(x,x^2,z)$ is the degenerated pyramid obtained after transforming the canonical basis into $\{(1,0,0),(2,0,0),(0,0,1)\}$. Thus, $\pcauset[alt={x<z, y<z}]{2, 1, 3}(x,x^2,z)=0\oplus z \oplus xz \oplus x^2z \oplus x^3z$.
\end{example}
\begin{example}
   Consider $\pcauset[alt={x<z, y<z}]{2, 1, 3}$ with the labels of its points as in Example~\ref{ex:ex}. Then, the polytope of $\pcauset[alt={x<z, y<z}]{2, 1, 3}(x,x^2\oplus y,z)$
   is the following union of the polytopes $$C\left((0,0,0),(0,0,1),(1,0,1),(2t,(1-t),1),(1+2t,(1-t),1)\right).$$
   As a homotopy between the upside-down pyramid and a triangle, the corners of the polytope are $(0,0,0),(0,0,1),(1,0,1),(2,0,1),(3,0,1),(0,1,1),(1,1,1)$. Thus,  
   $$\pcauset[alt={x<z, y<z}]{2, 1, 3}(x,x^2\oplus y,z)=0\oplus z\oplus xz\oplus x^2z\oplus x^3z\oplus yz\oplus xyz.$$
\end{example}

\subsection{Future work}
A major problem in the theory of algebras over the operad of posets is the precise description of the action of posets. The work of~\cite{SP} computes explicitly the action of the operad of series parallel posets on Stanley-order polynomials, while Section 2 of the paper by~\cite{OP} provide a description of the action of the poset \pcauset[alt={x<y>w<z}]{2, 4, 1, 3} on Stanley order polynomials without explicitly computing it.

As is common in category theory, understanding these operations should have implications in different fields. For example,~\cite{shuffle} provide the precise description of the operations \pcauset[alt={2-\hbox{chain} }]{1, 2} and \pcauset[alt={x,y}]{2, 1} acting on ``shuffles of posets'',  to answer a question with roots in dendroidal homotopy theory.

Although the Minkowski sum and the convex envelope are well-known operations, we are not aware of the study of other poset endomorphisms in geometry.

We restrict our attention to tropical polynomials in order to provide a geometric interpretation of our results. But it should be possible to understand the geometric information coming from tropical rational functions. For example, the paper by~\cite{TG}, in particular Proposition 6.1, describes the decision region of a neural network associated to a tropical rational function. We wonder if the theory of virtual polytopes described in~\cite{VP} can be related to tropical quotients of posets neural networks.

In this paper, we were able to extend endomorphisms of posets to endomorphisms of poset neural networks, but the general extension to IVNN is still open.


{This project started after Iryna Raievska and Maryna Raievska asked the first author about the applications of (operadic) order theory to machine learning during the Ukraine Algebra Conference ``At the End of the Year 2023'', and we thank them for their question.

The work of S.I. Kim, E. Dolores-Cuenca and S. L\'opez-Moreno was supported by the National Research Foundation of Korea (NRF) grant funded by the Korean Government (MSIP) (2022R1A5A1033624, 2021R1A2B5B03087097) and Global—Learning and Academic research institution for Master’s·PhD students, and Postdocs (LAMP) Program of the National Research Foundation of Korea (NRF) grant funded by the Ministry of Education (No. RS-2023-00301938).
S. L\'opez-Moreno was also supported by the Korea National Research Foundation (NRF) grant funded by the Korean government (MSIT) (RS-2024-00406152). J.L. Mendoza-Cortes acknowledges startup funds from Michigan State University. This work was supported in part through computational resources and services provided by the Institute for Cyber-Enabled Research at Michigan State University. }


\newpage

\appendix
\section{Additional information on experiments}
\label{app:theorem}



\subsection{Datasets used for experiments}\label{Sec:dataset}
Our experiments used the CIFAR10 dataset and the CIFAR100 dataset, described by~\cite{CIFARS}; the Fashion MNIST dataset, introduced by~\cite{FMNIST}; and a subset of ImageNet, created by~\cite{imagenet}. The subset, commonly refered as ImageNet100, was selected according to~\cite{imagenet100pytorch}.

\subsection{Choices made during the experiments}\label{Sec:choices}
Since the disjoint union of points has too many parameters and the four chain has too little, most of our experiments use the $\pcauset[alt={x<y>w<z}]{2, 4, 1, 3}$ filter.

 All experiments, including the implementations of the original architectures, used seeds to guarantee reproducibility, except those reported in the literature or with an explicit note. 
The average accuracy of the experiments without the reproducibility seeds was higher than the average accuracy after adding the reproducibility seeds. This behavior is well documented in the official documentation of PyTorch \url{https://pytorch.org/docs/stable/notes/randomness.html}.

Unless explicitly remarked, we used the default hyperparameters on each architecture, meaning, the default values on the Github repositories.

\section{Detailed experimental results}
Intuition tells us that our poset filter works best as a pooling function that is introduced almost at the end of the convolutional part of a CNN, but not immediately before the dense network. We believe that introducing the poset filter before one of the last convolutions still provides a significant reduction of parameters, and experimental results suggest that introducing it at this point results in minimal loss of information compared to a max pooling, average pooling, and mixed pooling.

\subsection{Quaternion Convolutional Neural Networks}
\label{Sec:exp}

Quaternion numbers can encode rotations with the advantage of using fewer parameters, since multiplication with a quaternion with four coordinates encodes the same information as multiplication with a ($3\times 3$) matrix with $9$ parameters.  
A quaternion neural network was studied by~\cite{QRNN}, with the motivation that quaternion neural networks require fewer parameters than real neural networks to achieve similar precision.

In our experiments our aim is to test a new pooling filter that will effectively reduce the dimensions of the input by half. We choose to work with quaternion neural networks as the quaternion assumption implies that all layers of the quaternion have inputs with dimensions divisible by 2. 

Our first experiment consists of inserting our function into their shallow neural network applied on CIFAR10. We compare the results with not only the ones obtained by~\cite{oQCNN}, but also with an alternative implementation in Pytorch developed by J. Weeda, R. Awad and M. Msallak in\\ \url{https://github.com/JorisWeeda/Quaternion-Convolutional-Neural-Networks/tree/main} (called QCNN (Weeda impl.) in the results' tables). This shallow network has two blocks of double convolutions followed by their respective ReLU activation functions, and we show the performance results of the model depending on the position of our function: after the first convolution (and its corresponding ReLU), before the last convolution and, lastly, immediately before the dense part. We also performed a $k$-fold cross-validation to demonstrate the robustness and generalizability of our model.

We have run our model on a V100 GPU and implemented it in Pytorch 2.4.1, as described by ~\cite{PyTorch}, with support for CUDA 12.4, following~\cite{CUDA}. Although we have kept the architecture mainly unchanged to facilitate a more direct comparison of the results, we have included reproducibility seeds, and, therefore, we will also include the results of the QCNN (Weeda impl.) when adding those same reproducibility seeds.

\subsubsection{Results when $\pcauset[alt={x<y>w<z}]{2, 4, 1, 3}$ filter is inserted after the first convolution}
In Table~\ref{table:exp11} we present the accuracy of our model when the $\pcauset[alt={x<y>w<z}]{2, 4, 1, 3}$ filter is inserted after the first convolution, calculated as the average accuracy over $14$ training runs. The results also include the accuracy when substituting the inserted $\pcauset[alt={x<y>w<z}]{2, 4, 1, 3}$ filter by a max pooling, average pooling, and mixed pooling.

In Table~\ref{table:exp13} we include the results when performing a $k$-fold cross-validation with $k=5$. The sample standard deviation (std) of the results was also calculated.

\begin{table}[h]
\centering
\begin{tabular}{|c||c|c|c|c|c|}
\hline
QCNN& Training& \# params & Time & Test acc & Std\\ 
&runs& & & &\\\hline\hline
Original & 1 & 2,032,650 & -- & 77.78\% & -- \\ \hline
Weeda impl. & 3 & 2,032,650 & 39' 34'' & \textbf{78.14}\% & -- \\
without seeds &&&&&\\ 
\hline
Weeda impl. & 14 & 2,032,650 & 25' 40'' & 77.83\% & 0.730\\ \hline
Weeda impl. + \pcauset[alt={x<y>w<z}]{2, 4, 1, 3} & 14 & 459,786 & 33' 15'' & 72.38\% & 0.835 \\ \hline
Weeda impl. & 14 & 459,786 & 33' 40'' & 74.64\% & 0.602 \\ 
+ max pooling &&&&&\\
(instead of \pcauset[alt={x<y>w<z}]{2, 4, 1, 3}) &&&&&\\ \hline
Weeda impl.& 14 & 459,786 & 28' 15'' & 70.15\% & 0.798 \\ 
 + avg. pooling  &&&&&\\
(instead of \pcauset[alt={x<y>w<z}]{2, 4, 1, 3}) &&&&&\\ \hline
Weeda impl.& 14 & 459,786 & 32' 40'' & 74.08\% & 0.816 \\ 
 + mixed pooling  &&&&&\\
(instead of \pcauset[alt={x<y>w<z}]{2, 4, 1, 3}) &&&&&\\ \hline
\end{tabular}
\caption{Experiment results on CIFAR10 classification task for quaternion convolutional neural networks with 80 epochs}
    \label{table:exp11}
\end{table}

\begin{table}[h]
\centering
\begin{tabular}{|c||c|c|c|c|}
\hline
QCNN& \# folds & \# params & Val acc & Std  \\ \hline\hline
Weeda impl. & 5 & 2,032,650 & \textbf{75.54}\% & 0.936 \\ 
\hline
Weeda impl. + \pcauset[alt={x<y>w<z}]{2, 4, 1, 3} & 5 & 459,786 & 70.38\% & 1.014  \\ \hline
Weeda impl. + max pooling& 5 & 459,786 & 72.74\% & 0.589 \\ 
(instead of \pcauset[alt={x<y>w<z}]{2, 4, 1, 3}) &&&&\\ \hline
Weeda impl. + avg. pooling & 5 & 459,786 & 68.09\% & 0.857 \\ 
(instead of \pcauset[alt={x<y>w<z}]{2, 4, 1, 3}) &&&&\\ \hline
Weeda impl. + mixed pooling & 5 & 459,786 & 71.73\% & 0.630 \\ 
(instead of \pcauset[alt={x<y>w<z}]{2, 4, 1, 3}) &&&&\\ \hline
\end{tabular}
\caption{Cross-validation results on CIFAR10 classification task for quaternion convolutional neural networks with 80 epoch}
    \label{table:exp13}
\end{table}

\subsubsection{Results when $\pcauset[alt={x<y>w<z}]{2, 4, 1, 3}$ filter is inserted in the optimal position: inserted before the last convolution} 

In Table~\ref{table:exp14} we present the accuracy of our model when the $\pcauset[alt={x<y>w<z}]{2, 4, 1, 3}$ filter is inserted before the last convolution, calculated as the average accuracy over $14$ training runs with different seeds each. Table~\ref{table:exp16} shows the results for the $k$-fold cross-validation with $k=5$. \par 

\begin{table}[h]
\centering
\begin{tabular}{|c||c|c|c|c|c|}
\hline
QCNN & Training& \# params & Time & Test acc & Std \\
&runs&&&&\\\hline\hline
Original & 1 & 2,032,650 & -- & 77.78\% & --  \\ \hline
Weeda impl. & 3 & 2,032,650 & 39' 34''& 78.14\% & --\\ 
wihtout seeds&&&&&\\
\hline
Weeda impl. & 14 & 2,032,650 & 25' 40''& 77.83\% & 0.730 \\ 
\hline
Weeda impl.  & 14 & 656,394 & 34' 20'' & \textbf{78.92}\% & 0.514 \\ 
+ \pcauset[alt={x<y>w<z}]{2, 4, 1, 3}&&&&&\\
\hline
Weeda impl.  & 14 & 656,394 & 27' 15'' & 76.75\% & 0.397 \\ 
 + max pooling&&&&&\\
(instead of \pcauset[alt={x<y>w<z}]{2, 4, 1, 3}) &&&&&\\ \hline
Weeda impl.   & 14 & 656,394 & 28' 15'' & 76.097\% & 1.104 \\
 + avg. pooling &&&&&\\
 (instead of \pcauset[alt={x<y>w<z}]{2, 4, 1, 3}) &&&&&\\
\hline
Weeda impl.  & 14 & 656,394 & 27' 10'' & 76.41\% & 0.670 \\
 + mixed pooling   &&&&&\\
 (instead of \pcauset[alt={x<y>w<z}]{2, 4, 1, 3}) &&&&&\\
\hline
\end{tabular}
\caption{Experiment results on CIFAR10 classification task for quaternion convolutional neural networks with 80 epochs}
    \label{table:exp14}
\end{table}

\begin{table}[h]
\centering
\begin{tabular}{|c||c|c|c|c|}
\hline
QCNN& \# folds & \# params & Val acc & Std \\ \hline\hline
 Weeda impl. & 5 & 2,032,650 & 75.54\%  & 0.936  \\ 
\hline
Weeda impl. + \pcauset[alt={x<y>w<z}]{2, 4, 1, 3} & 5 & 656,394 & \textbf{76.78}\% & 0.517   \\ \hline
Weeda impl. + max pooling& 5 & 656,394 &  74.23\% & 0.964   \\ 
(instead of \pcauset[alt={x<y>w<z}]{2, 4, 1, 3}) &&&&\\ \hline
Weeda impl. + avg. pooling & 5 & 656,394 & 73.01\% & 1.231  \\ (instead of \pcauset[alt={x<y>w<z}]{2, 4, 1, 3}) &&&&\\ \hline
Weeda impl. + mixed pooling & 5 & 656,394 & 73.23\% & 1.035  \\(instead of \pcauset[alt={x<y>w<z}]{2, 4, 1, 3}) &&&&\\  \hline
\end{tabular}
\caption{Cross-validation results on CIFAR10 classification task for quaternion convolutional neural networks with 80 epochs}
    \label{table:exp16}
\end{table}

\subsubsection{Results when $\pcauset[alt={x<y>w<z}]{2, 4, 1, 3}$ filter is inserted immediately before the dense network}

In Table~\ref{table:exp17} we present the accuracy of our model when the $\pcauset[alt={x<y>w<z}]{2, 4, 1, 3}$ filter is inserted immediately before the dense part, calculated as the average accuracy over $14$ training runs. In Table~\ref{table:exp19} appear the results for the $k$-fold cross-validation with $k=5$.\par

\begin{table}[h]
\centering
\begin{tabular}{|c||c|c|c|c|c|}
\hline
QCNN & Training& \# params & Time & Test acc & Std\\ 
&runs&&&&\\\hline\hline
Original & 1 & 2,032,650 & -- & 77.78\% & --  \\ \hline
Weeda impl. & 3 & 2,032,650 & 39' 34'' & 78.14\%  & -- \\
without seeds &&&&&\\ 
\hline
Weeda impl. & 14 & 2,032,650 & 25' 40'' & 77.83\% & 0.730 \\ \hline
Weeda impl.   & 14 & 656,394 & 32' 40'' & \textbf{78.11}\% & 0.876 \\ 
+\pcauset[alt={x<y>w<z}]{2, 4, 1, 3}&&&&&\\
\hline
Weeda impl.  & 14 & 656,394 & 26' 15'' & 75.05\% & 1.169 \\ 
 + max pooling&&&&&\\
(instead of \pcauset[alt={x<y>w<z}]{2, 4, 1, 3}) &&&&&\\ \hline
Weeda impl.  & 14 & 656,394 & 28' 10'' & 75.8\% & 0.835 \\
 + avg. pooling &&&&&\\
 (instead of \pcauset[alt={x<y>w<z}]{2, 4, 1, 3}) &&&&&\\\hline
Weeda impl.  & 14 & 656,394 & 30' 15'' & 74.64\% & 1.354 \\
+ mixed pooling &&&&&\\
(instead of \pcauset[alt={x<y>w<z}]{2, 4, 1, 3}) &&&&&\\\hline
\end{tabular}
\caption{Experiment results on CIFAR10 classification task for quaternion convolutional neural networks with 80 epochs}
    \label{table:exp17}
\end{table}

\begin{table}[h]
\centering
\begin{tabular}{|c||c|c|c|c|}
\hline
QCNN& \# folds & \# params & Val acc & Std\\ \hline\hline
 Weeda impl. & 5 & 2,032,650 & 75.54 & 0.936  \\ 
\hline
Weeda impl. + \pcauset[alt={x<y>w<z}]{2, 4, 1, 3} & 5 & 656,394 & \textbf{76.21}\% & 0.929   \\ \hline
Weeda impl. + max pooling& 5 & 656,394 & 71.85\% & 0.793 \\ 
(instead of \pcauset[alt={x<y>w<z}]{2, 4, 1, 3}) &&&&\\ \hline
Weeda impl. + avg. pooling & 5 & 656,394 & 72.88\% & 0.758 \\ (instead of \pcauset[alt={x<y>w<z}]{2, 4, 1, 3}) &&&&\\ \hline
Weeda impl. + mixed pooling & 5 & 656,394 & 71.7\% & 1.981 \\ (instead of \pcauset[alt={x<y>w<z}]{2, 4, 1, 3}) &&&&\\ \hline
\end{tabular}
\caption{Cross-validation results on CIFAR10 classification task for quaternion convolutional neural networks with 80 epochs}
    \label{table:exp19}
\end{table}

\subsubsection{Additional experiments}

So far, we have conducted experiments using the $\pcauset[alt={x<y>w<z}]{2, 4, 1, 3}$ poset filter. However, there are in total $16$ different posets with $4$ vertices or nodes, including the $\pcauset[alt={x<y>w<z}]{2, 4, 1, 3}$ poset. Therefore, we conduct experiments on the remaining $15$ posets to compare their performance as well. See results in Table~\ref{table:exp110}. 

Moreover, since the $\pcauset[alt={x<y>w<z}]{2, 4, 1, 3}$ poset is defined as the maximum of $8$ specific terms, one of which is $0$, we conduct experiments with $30$ random functions that take the maximum between $0$ and $7$ random linear combinations of the four inputs. The corresponding results appear in Table~\ref{table:exp111}. 

\begin{table}[h]
\centering
\begin{tabular}{|c||c|c|c|c|c|c|}
\hline
QCNN& Training runs & \# params & Time & Test acc & Std\\ 
Weeda impl. + &&&&&\\\hline\hline

 \pcauset[alt={x<z,y<z,w<z}]{3,2,1,4} & 14 & 656,394 & 43' 40'' & 78.78\% & 0.646\\ \hline
 \pcauset[alt={x,y,w,z}]{4,3,2,1} & 14 & 656,394 & 40' 55'' & 78.95\% & 0.510\\ \hline
 \pcauset[alt={x,y,w<z}]{4,3,1,2} & 14 & 656,394 & 39' 50'' & 78.79\% & 0.609\\ \hline
 \pcauset[alt={x<y<z,w<y}]{2,1,3,4} & 14 & 656,394 & 38' 10'' & 78.94\% & 0.429\\ \hline
 \pcauset[alt={x,y<z,w<z}]{4,2,1,3} & 14 & 656,394 & 37' 50'' & 78.91\% & 0.645\\ \hline
 \pcauset[alt={x,y<w,y<z}]{4,1,3,2} & 14 & 656,394 & 37' 50'' & 78.92\% & 0.595\\ \hline
 \pcauset[alt={x<y,w<z}]{3,4,1,2} & 14 & 656,394 & 37' 15'' & 78.74\% & 0.566\\ \hline
 \pcauset[alt={x<w<z, w<y }]{1, 2, 4, 3} & 14 & 656,394 & 36' 20'' & \textbf{79.1}\% & 0.449 \\ \hline
 \pcauset[alt={x,y<w<z}]{4,1,2,3} & 14 & 656,394 & 35' 30'' & 79.01\% & 0.706\\ \hline
 \pcauset[alt={x<y,x<w,x<z}]{1,4,3,2} & 14 & 656,394 & 35' 25'' & 79\% & 0.559\\ \hline
 \pcauset[alt={x<y<z,x<w}]{1,3,4,2} & 14 & 656,394 & 34' 40'' & 78.98\% & 0.507\\ \hline
 \pcauset[alt={x<y,x<z,w<y,w<z}]{2,1,4,3} & 14 & 656,394 & 34' 25'' & 78.95\% & 0.742\\ \hline
 \pcauset[alt={x<y>w<z}]{2, 4, 1, 3} & 14 & 656,394 & 34' 20'' & 78.92\% & 0.514\\ \hline
 \pcauset[alt={x,y,w,z}]{1,2,3,4} & 14 & 656,394 & 33' 30'' & 78.92\% & 0.616 \\ \hline
 \pcauset[alt={x<y<z,x<w<z}]{1,3,2,4} & 14 & 656,394 & 33' 25'' & 78.96\% & 0.442 \\ \hline
 \pcauset[alt={x<y<z,w<z}]{3,1,2,4} & 14 & 656,394 & 33' 25'' & 78.81\% & 0.678 \\ \hline
\end{tabular}
\caption{Experiment results for all $4$-vertex-posets on CIFAR10 classification task for quaternion convolutional neural networks with 80 epochs}
    \label{table:exp110}
\end{table}

To understand how the choice of poset affects the encoding properties of a poset filter, we sample points in a lattice of points of the form $(\pm \frac{a}{25}, \ \pm \frac{b}{25}, \ \pm \frac{c}{25},\ \pm \frac{d}{25})$, with $0\leq  a,b,c,d\leq 25$, and we consider only those points inside the unit four-dimensional ball. Figure~\ref{fig:enter-Histogram} shows the histogram of positive values and their corresponding standard deviation. The histogram was created with Matplotlib (\cite{MPL}). Although most values are close to zero, there is a large difference among the standard deviations.
\begin{figure}[h]
    \centering
\includegraphics[width=1\linewidth]{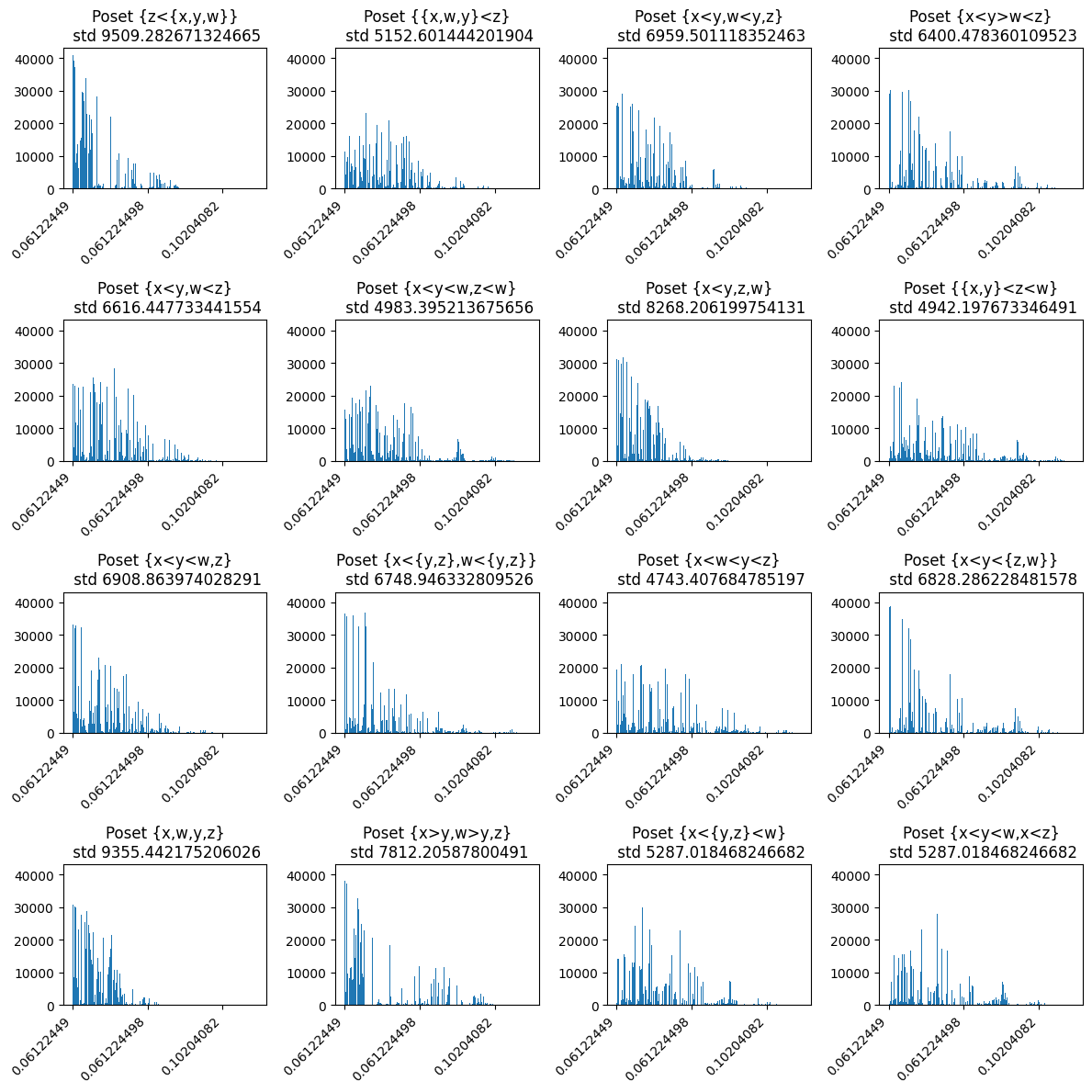}
    \caption{Histogram of the image of the lattice points in the sphere under different posets transformations.}
    \label{fig:enter-Histogram}
\end{figure}

\begin{table}[h]
\centering
\begin{tabular}{|c||c|c|c|c|c|c|}
\hline
QCNN & Training& \# params & Time & Test acc & Std \\
Weeda impl. + &runs&&&&\\\hline\hline
 Rndm vt \# 1 & 10 & 656,394 & 50' 25'' & 78.11\% & 0.758\\ \hline
 Rndm vt \# 2 & 10 & 656,394 & 53' 35'' & 78.05\% & 0.736\\ \hline
 Rndm vt \# 3 & 10 & 656,394 & 47' 15'' & 78.03\% & 0.805\\ \hline
 Rndm vt \# 4 & 10 & 656,394 & 42' 10'' & 78.09\% & 0.705 \\ \hline
 Rndm vt \# 5 & 10 & 656,394 & 39' 55'' & 77.91\% & 0.642\\ \hline
 Rndm vt \# 6 & 10 & 656,394 & 38' 20'' & 78.22\% & 0.629\\ \hline
 Rndm vt \# 7 & 10 & 656,394 & 36' 15'' & 78.16\% & 0.569\\ \hline
 Rndm vt \# 8 & 10 & 656,394 & 35' 25'' & 78.09\% & 0.539\\ \hline
 Rndm vt \# 9 & 10 & 656,394 & 34' 55'' & 78.18\% & 0.769\\ \hline
 Rndm vt \# 10 & 10 & 656,394 & 34' 40'' & \textbf{78.38}\% & 0.734\\ \hline
 Rndm vt \# 11 & 10 & 656,394 & 32' 50'' & 77.73\% & 0.725\\ \hline
 Rndm vt \# 12 & 10 & 656,394 & 32' 25'' & 77.73\% & 0.746\\ \hline
\end{tabular}
\caption{Experiment results for random vectors on CIFAR10 classification task for quaternion convolutional neural networks with 80 epochs}
    \label{table:exp111}
\end{table}

\subsection{CNN}\label{Sec:ex3}

We work with Fashion MNIST with the architecture from \url{https://github.com/Abhi-H/CNN-with-Fashion-MNIST-dataset}, a standard convolutional neural network with ELU, a non linearity, defined by~\cite{ELU}, and two max pooling layers. The original network achieves $90.65\%$ accuracy. Our best result was a $91.84\%$ accuracy, with an average accuracy of $91.26\%$ over $14$ runs, obtained by replacing the second max pooling by the $\pcauset[alt={x<y>w<z}]{2, 4, 1, 3}$ filter. We also include the result of replacing the second max pooling by average pooling and mix pooling, see Table~\ref{table:expCNN}.

\begin{table}[h]
\centering
\begin{tabular}{|c||c|c|c|}
\hline
    CNN &  Training runs  & Test acc                & Std\\ 
    \hline\hline
      Original&14               & $ 90.65\%$              &0.36\\ 
      \hline
Replaced &14   &  $90.268\%$              & 0.234\\
first max pooling& &&\\
by $\pcauset[alt={x<y>w<z}]{2, 4, 1, 3}$ &&&\\
\hline
Replaced&14   &  $\textbf{91.26}\%$              & 0.466\\
second max pooling& &&\\
by $\pcauset[alt={x<y>w<z}]{2, 4, 1, 3}$ &&&\\
\hline
Replaced&14   &  $90.747\%$              &0.404 \\
both max pooling& &&\\
by $\pcauset[alt={x<y>w<z}]{2, 4, 1, 3}$ &&&\\
\hline
Replaced&14   &  ${90.41}\%$              & 0.454\\
second max pooling& &&\\
by mix pooling &&&\\
\hline
Replaced&14   &  ${90.15}\%$              & 0.313\\
second max pooling& &&\\
by avg. pooling &&&\\
\hline    
\end{tabular}
\caption{Experiment results on a real CNN for Fashion MNIST classification task, replacing max poolings in different positions by the $\pcauset[alt={x<y>w<z}]{2, 4, 1, 3}$ poset.\label{table:expCNN}}
\end{table}

In Table~\ref{table:exp2} we  report cross-validation with $k=5$. All experiments were performed on NVIDIA A100-SXM4-80GB.

\begin{table}[h]
\centering
\begin{tabular}{|c||c|c|c|c|}
\hline
CNN & Training runs & \# params & Val acc &Std\\ 
\hline\hline
{Original}& 5 &28938 &90.521 & 0.151\\
 
&&&&\\
\hline
Replaced & 5 & 28938& \textbf{90.97} &0.321 \\
 second max pooling &&&&\\
 by $\pcauset[alt={x<y>w<z}]{2, 4, 1, 3}$ &&&&\\
 \hline
\end{tabular}
\caption{Cross-validation results on Fashion MNIST classification task for a real convolutional neural network with 80 epochs}
    \label{table:exp2}
\end{table}

We also implemented \url{https://pytorch.org/tutorials/beginner/blitz/cifar10_tutorial.html} on CIFAR10, the only modification being the use of the correct parameters for the normalization transformation of the dataset. The test accuracy of the original architecture with our implementation (with seeds) has an average of $60.41\%$. Our best result was $60.71\%$, replacing the second max pooling by the $\pcauset[alt={x<y>w<z}]{2, 4, 1, 3}$ filter. We then replaced the second max pooling by avg pooling and obtained an average test accuracy of $62.13\%$. 
See Table~\ref{table:expCNNCw}.


\begin{table}[h]
\centering
\begin{tabular}{|c||c|c|c|}
\hline
    CNN &  Training runs  & Test acc                & Std\\ 
    \hline\hline
      Original&14               & $ 60.41\%$              &0.9\\ 
      \hline
Replaced &14   &  $57.288\%$              & 0.58\\
first max pooling& &&\\
by $\pcauset[alt={x<y>w<z}]{2, 4, 1, 3}$ &&&\\
\hline
Replaced&14   &  ${60.71}\%$              & 0.784\\
second max pooling& &&\\
by $\pcauset[alt={x<y>w<z}]{2, 4, 1, 3}$ &&&\\
\hline
Replaced&14   &  $57.43\%$              &0.74 \\
both max pooling& &&\\
by $\pcauset[alt={x<y>w<z}]{2, 4, 1, 3}$ &&&\\
\hline
Replaced&14   &  60.17\%              & .82\\
second max pooling& &&\\
by mix pooling &&&\\
\hline
Replaced&14   &  $\textbf{62.13}\%$              & .78\\
second max pooling& &&\\
by avg. pooling &&&\\
\hline    
\end{tabular}
\caption{Experiment results on the tutorial CNN for CIFAR 10 classification task, replacing max poolings in different positions by the $\pcauset[alt={x<y>w<z}]{2, 4, 1, 3}$ poset.\label{table:expCNNCw}}
\end{table}

\subsection{SimpleNet}\label{sec:simp}
In this experiment we implemented the official PyTorch repository \url{https://github.com/Coderx7/SimpleNet_Pytorch}, where we used the architecture `simplenetv1\_9m\_m2' with 9m parameters. According to the file \url{cifar/models/simplenet.py} in the official repository, the arquitecture contains two layers labeled with `p'. They consist on a MaxPool followed by a Dropout. We considered four architectures: the original, one in which we modify the first layer `p' by replacing the MaxPool with the $\pcauset[alt={x<y>w<z}]{2, 4, 1, 3}$  filter, a third architecture in which we replaced  the MaxPool by the $\pcauset[alt={x<y>w<z}]{2, 4, 1, 3}$ filter in the second `p' layer, and a final architecture in which we replaced the MaxPool by the $\pcauset[alt={x<y>w<z}]{2, 4, 1, 3}$ filter in both `p' layers. 

We evaluated on ImageNet100, and we repeated experiments three times. More repetitions are currently running.

We use the same one hundred classes of ImageNet100 as~\cite{imagenet100pytorch}. Each of the 100 validation classes contains 50 images. 

Experiments consisted of choosing an architecture (out of the four possibilities) and training  and validating on the public train/validation data for 700 epochs. We repeated the previous step six times (having in total 24 experiments). 
\begin{figure}[h]
    \centering
    \includegraphics[width=.95\linewidth]{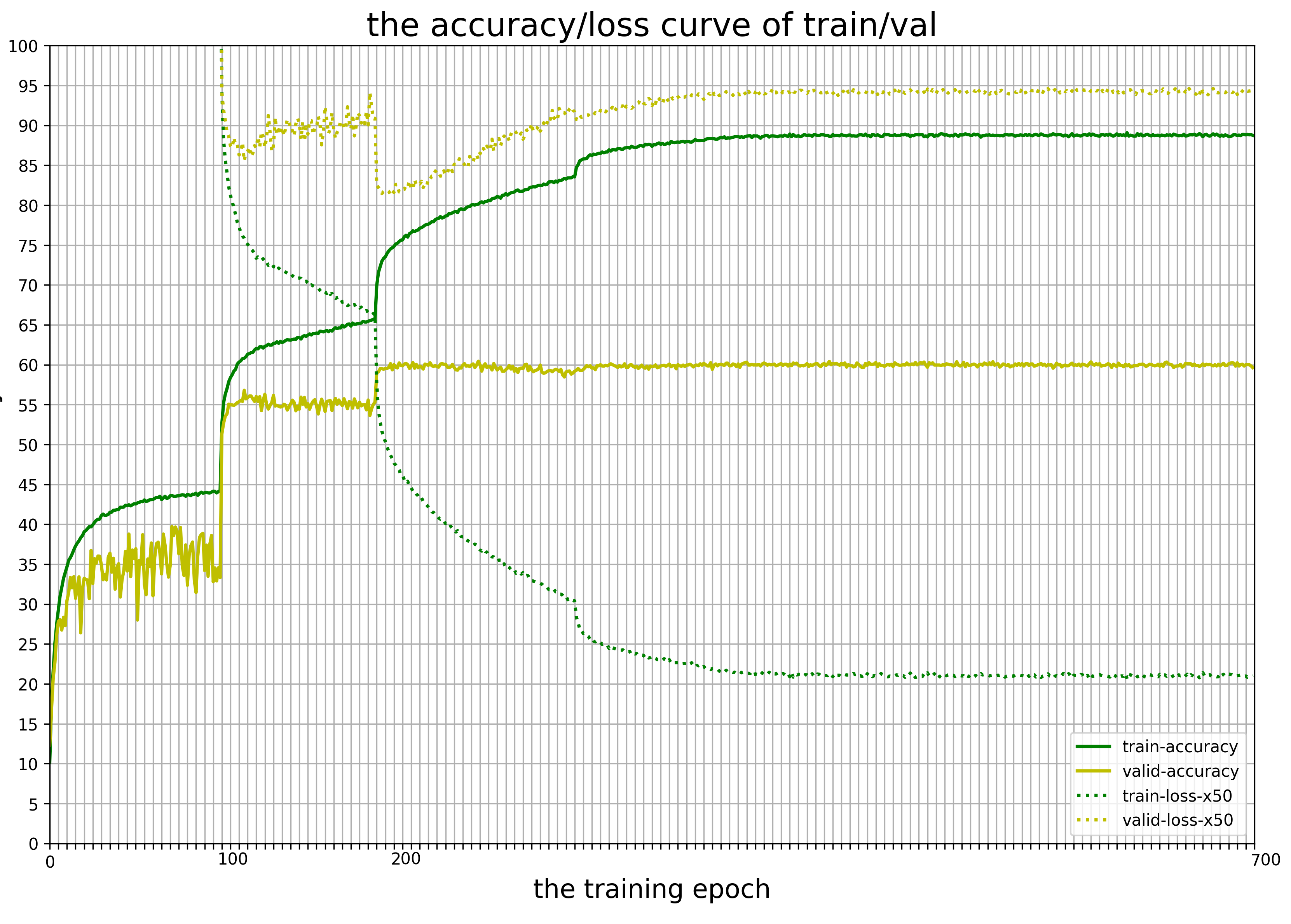}
    \caption{A plot of accuracy/loss per epoch for train/validation data on the original SimpleNet architecture, with 700 epochs. The loss is multiplied by 50. There are learning rate changes at epochs 100, 190, 306, 390, 440, 540, which correspond to the main jumps in train/valid accuracy. All experiments returned similar plots.}
    \label{fig:trainval}
\end{figure}

After analyzing Figure~\ref{fig:trainval}, we found that the algorithm overfits after epoch 200 (train accuracy increases and train loss decreases, but validation loss increases while validation accuracy plateaued). Then, instead of reporting validation accuracy at epoch 700, for each experiment we find the epoch with lowest validation loss, which occurs before epoch 200, and obtain the corresponding validation accuracy. 

For each architecture we report the average of the validation accuracy as described above. Note that the epoch in which the validation accuracy was measured may differ not only for different architectures but also within repetitions of the same experiment with different seeds.

Table~\ref{table:expimg100C1} contains the average (per architecture) validation accuracy. We found that replacing the max pooling at the first `p' layer with the $\pcauset[alt={x<y>w<z}]{2, 4, 1, 3}$ poset returned an average validation accuracy of $60.1667\%$ with std $0.58578$, compared to $59.85\%$ with std $0.4939$ in the original architecture.

\begin{table}[h]
\centering
\begin{tabular}{|c||c|c|c|}
\hline
    SimpleNet &  Training runs  &Avg. val acc at               & Std\\ 
    &&the lowest val loss&\\
    \hline\hline
      Original&6               & $59.8500\%$              & $0.4939$\\ 
      \hline
Replaced max pooling&6   &  $59.7233\%$              &$0.56447$ \\
in both `p' layers& &&\\
by $\pcauset[alt={x<y>w<z}]{2, 4, 1, 3}$ &&&\\\hline
Replaced max pooling&6   &  $\textbf{60.1667}\%$              & $0.58578$\\
in the first `p' layer& &&\\
by $\pcauset[alt={x<y>w<z}]{2, 4, 1, 3}$  &&&\\
\hline
Replaced max pooling&6   &  $59.7867\%$              &$0.3572$ \\
in the second `p' layer& &&\\
by $\pcauset[alt={x<y>w<z}]{2, 4, 1, 3}$ &&&\\
\hline
\end{tabular}
\caption{Experiment results on SimpleNet for ImageNet 100 classification task, replacing in different positions by the $\pcauset[alt={x<y>w<z}]{2, 4, 1, 3}$ poset.\label{table:expimg100C1}}
\end{table}


The experiments in this section were conducted on the ICER Data Machine; the programs were run on NVIDIA A100 GPUs split into units with 10GB each.

\subsection{DenseNet}\label{sec:Dense}\label{Sec:exp2}

In this section, we report experiments on the DenseNet architecture of~\cite{DenseNET} with Fashion MNIST, CIFAR10 and CIFAR100 datasets.

We follow the Pytorch implementation of DenseNet \url{https://github.com/bamos/densenet.pytorch} by~\cite{DenseNET_p}. It has dense blocks separated by transitions layers. Transition layers include an average pooling layer. After the bottlenecks, there is another average pooling of size 8, followed by a dense section.

Our goal was to test the result of adding the $\pcauset[alt={x<y>w<z}]{2, 4, 1, 3}$ poset filter along the convolutional part of DenseNet. Unfortunately, the input of the dense layer has width and height 1 and adding a poset filter required major changes to the architecture. Then, we wondered if we could replace the several instances of average pooling layers of DenseNet by our poset filter.

We tested several combinations, but some did not converge. For example, the replacement of $F.avg\_pool2d(, 8)$ by $F.avg\_pool2d( \pcauset[alt={x<y>w<z}]{2, 4, 1, 3}(F.avg\_pool2d( ,2) ), 2)$ did not converge for Fashion MNIST, CIFAR10 or CIFAR100.

With respect to the Fashion MNIST dataset, when we run the original code we could only achieve an accuracy of $95.2593\%$ with std $0.141$, while the officially recorded value is of $95.4\%$. However, we note that we added seeds to make our results reproducible. Another difference from the GitHub code is that we added a padding of 2 to all sides of the Fashion MNIST dataset images, since the GitHub code is designed for the CIFAR dataset, which has different dimensions. Our best experiment (replacing the average pooling by the poset filter on the second transition layer) returned an average of $95.2864\%$ with $0.134$ std. While we obtained a higher test accuracy than the original architecture, if instead of the $\pcauset[alt={x<y>w<z}]{2, 4, 1, 3}$ filter we use a max pooling (replacing the average pooling by the max pooling filter on the second transition layer), then we obtain an even higher accuracy of $95.292\%$ with $0.122$ std. Adding the $\pcauset[alt={x<y>w<z}]{2, 4, 1, 3}$ filter does not affect the accuracy in this case, but for certain combinations of data and architectures other filters may be a better fit. We also tested removing the ReLU, reaching the same conclusion.

We conducted the same experiments with CIFAR10, replacing the average pooling layers with the $\pcauset[alt={x<y>w<z}]{2, 4, 1, 3}$ filter, but our best result ($95.081\%$ test accuracy with a standard deviation of $0.162$), obtained by replacing the average pooling in the first transition by the $\pcauset[alt={x<y>w<z}]{2, 4, 1, 3}$ filter, is lower than the accuracy for the original architecture ($95.128\%$ with $0.119$ std). In this case, replacing the average pooling in the first transition by a max pooling returned an average accuracy of $95.216\%$ with $0.12$ std.

Similarly, with CIFAR100 our best result was obtained by replacing the average pooling in the first transition by the $\pcauset[alt={x<y>w<z}]{2, 4, 1, 3}$ filter, with an average accuracy of $76.93\%$ and $0.258$ std, while the original architecture returned $76.989\%$ with $0.288$ std. Instead, when replacing the average pooling in the first transition by a max pooling, we obtained an accuracy of $77.13\% $ with a standard deviation of $0.294$.

All experiments were performed on NVIDIA A100-SXM4-80GB.

\subsection{Filter functions}
\label{A:1}
Consider the transformation, which we will call \pcauset[alt={x, y, w, z}]{4, 3, 2, 1} filter (disjoint union of points), that sends the $(2\times 2)$ square matrix ${\begin{bmatrix}
   a_{0,0} & a_{0,1} \\
   a_{1,0} & a_{1,1} \\
  \end{bmatrix} }$
to the maximum of all possible sums out of the four inputs, as seen in Equation~\eqref{eqn:cube2}. 


This transformation returns the partial sum that has the largest value. Following~\cite{CNNF}, we can think of this operation a geometric transformation on an input image, assuming that the pixels have values between $(-1,1)$ and that we normalize the values before displaying the image. 

There are two drawbacks to this transformation: first, the image needs to be normalized after the transformation. Secondly, it contains many operations. 

In contrast, we also consider the following transformation, which we will call \pcauset[alt={4-\hbox{chain} }]{1, 2, 3, 4} filter:
$$\max\{0,a_{0,0},
a_{0,0}+a_{0,1},
a_{0,0}+a_{0,1}+a_{1,1},a_{0,0}+a_{1,0}+a_{0,1}+a_{1,1}\}.$$  

We conducted experiments on an image of $4,868\times 3,245$ pixels, uploaded to Wikimedia by Diego Delso, delso.photo, License CC BY-SA. In Figure~\ref{fig:motivation}, we plot the resultant image after applying the \pcauset[alt={x, y, w, z}]{4, 3, 2, 1} filter and the \pcauset[alt={4-\hbox{chain} }]{1, 2, 3, 4} filter. We also plot the effect of max pooling and average pooling to compare the effects.

    \begin{figure}[h]
        \centering
        \begin{subfigure}[b]{0.45\textwidth}
            \centering
            \includegraphics[width=\textwidth]{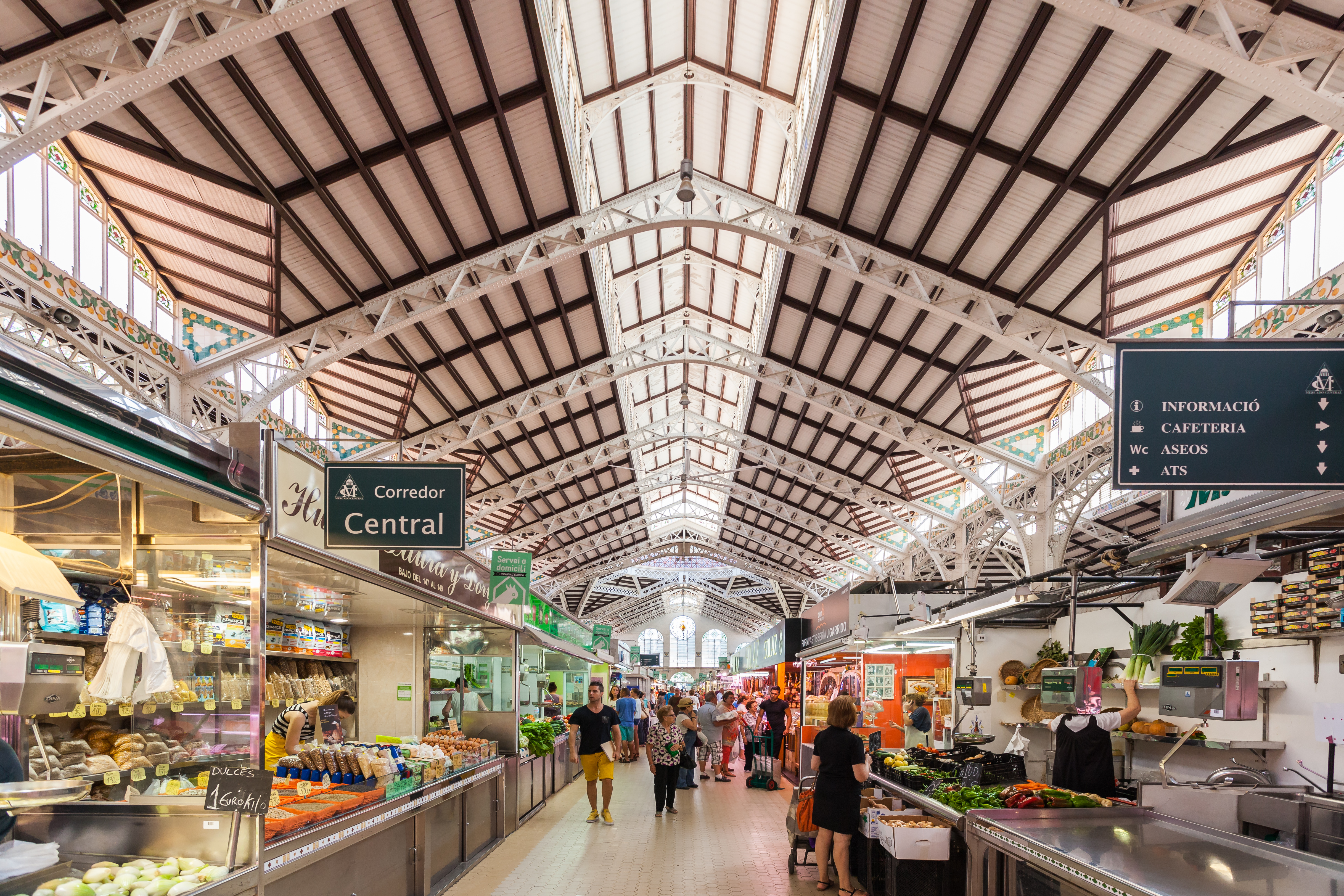}
            \caption[]%
            {{\small Original\\ image}}    
            \label{fig:mean and std of net14}
        \end{subfigure}
          \vskip\baselineskip
        \begin{subfigure}[b]{0.3\textwidth}  
            \centering 
            \includegraphics[width=\textwidth]{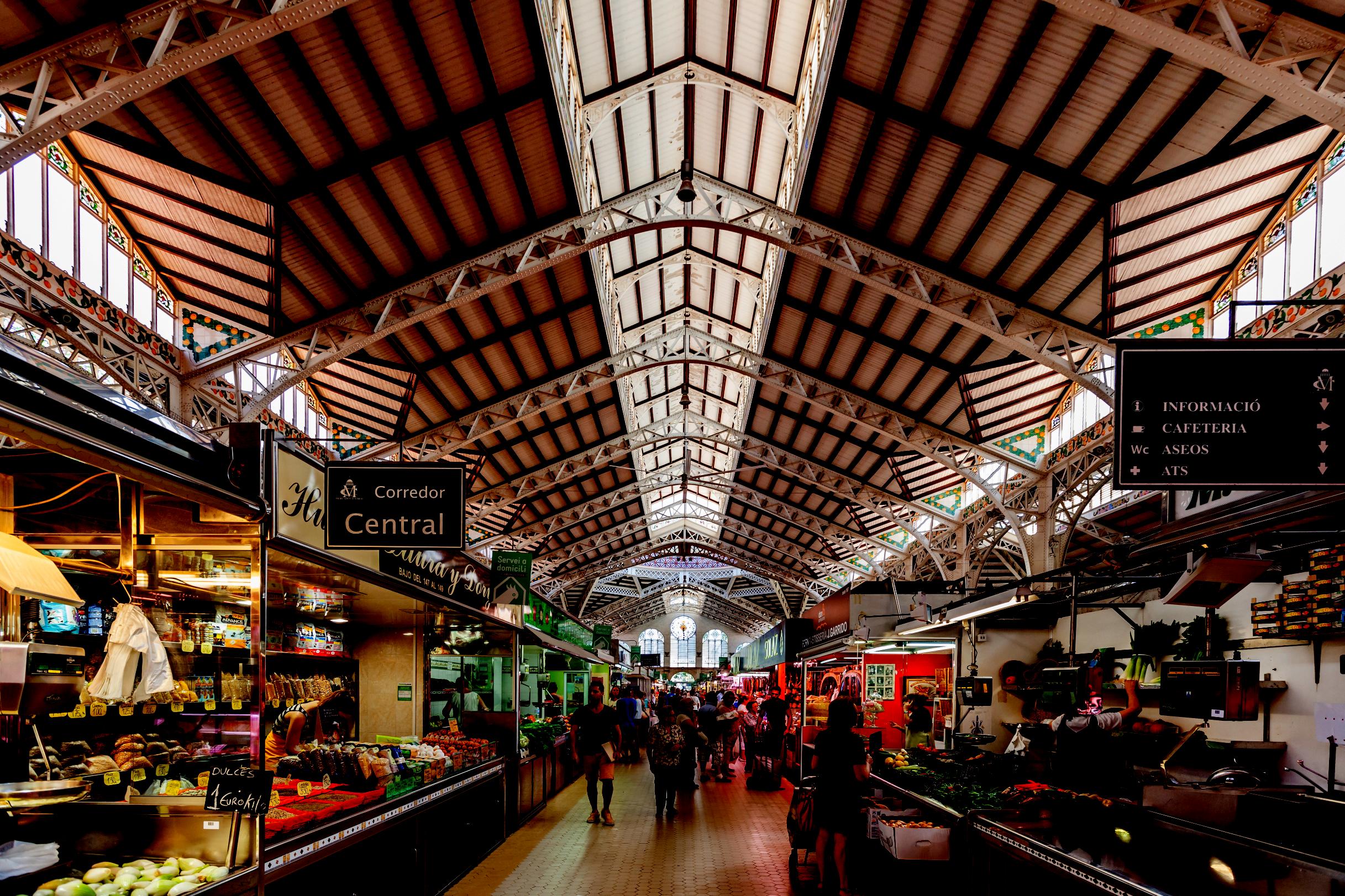}
            \caption[]%
            {{\small Effect of \pcauset[alt={4-\hbox{chain} }]{1, 2, 3, 4}\\ filter}}    
            \label{fig:mean and std of net24}
        \end{subfigure}
        \hfill
        \begin{subfigure}[b]{0.3\textwidth}   
            \centering 
            \includegraphics[width=\textwidth]{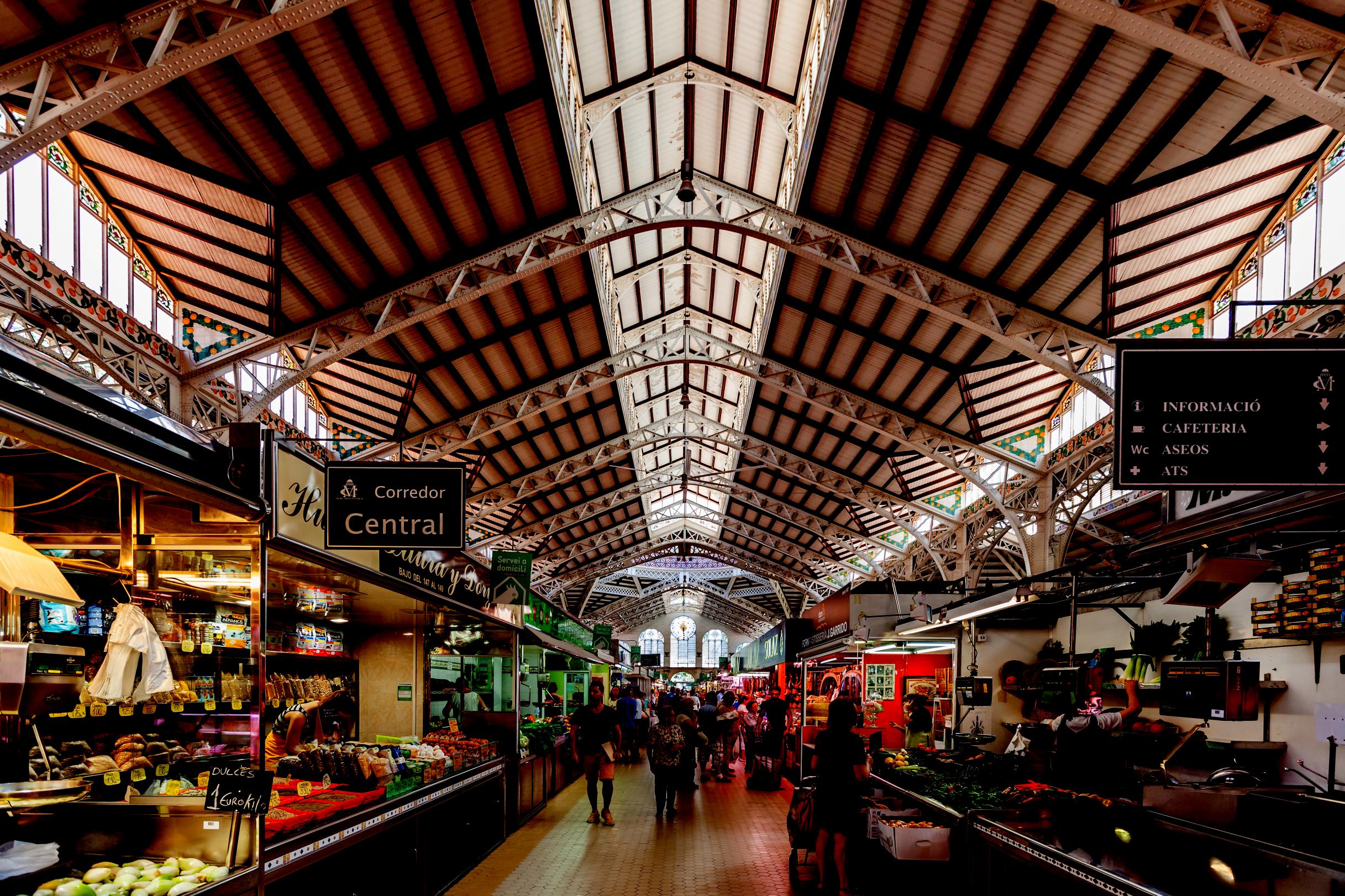}
            \caption[]%
            {{\small Effect of \pcauset[alt={x, y, w, z}]{4, 3, 2, 1}\\ filter}}    
            \label{fig:mean and std of net34}
        \end{subfigure}          
        \vskip\baselineskip
        \begin{subfigure}[b]{0.3\textwidth}  
            \centering 
            \includegraphics[width=\textwidth]{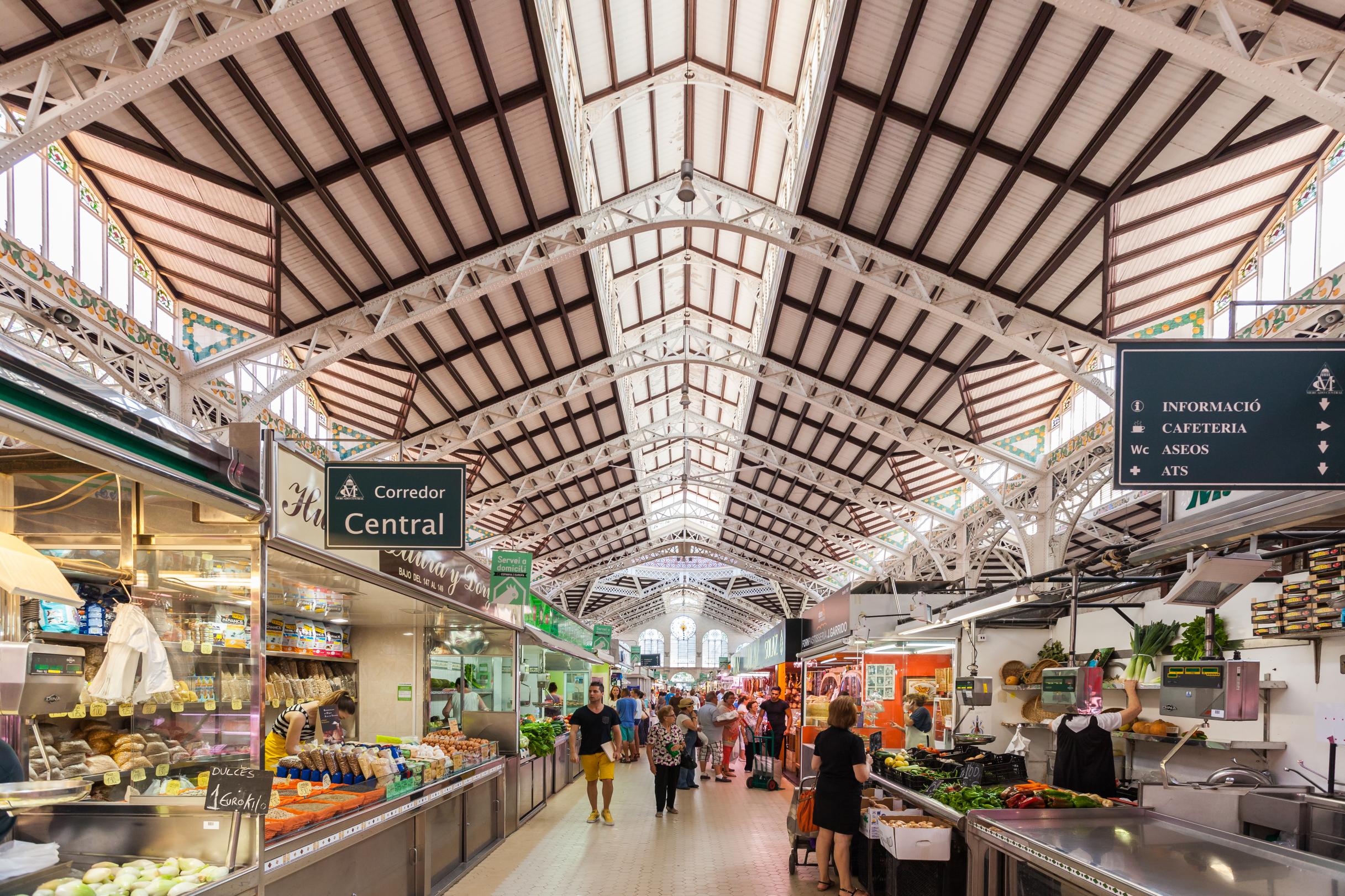}
            \caption[]%
            {{\small Effect of \\ average filter}}    
            \label{fig:mean and std of net44}
        \end{subfigure}
        \hfill
        \begin{subfigure}[b]{0.3\textwidth}   
            \centering 
            \includegraphics[width=\textwidth]{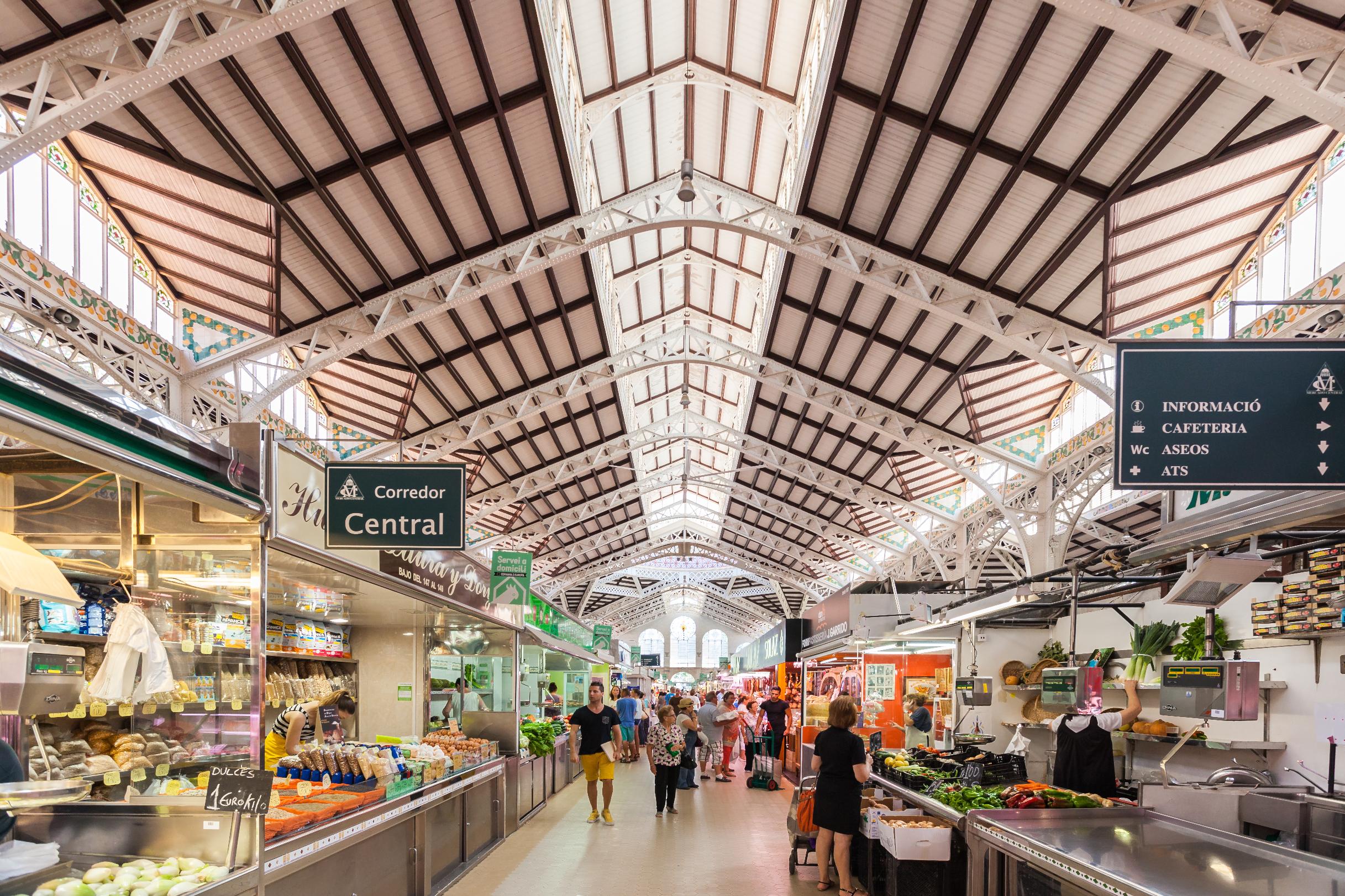}
            \caption[]%
            {{\small Effect of \\ max filter}}    
            \label{fig:mean and std of net54}
        \end{subfigure}
        \caption[] 
         {\small Images b), c), d), and e) have half the dimensions of image a). }
        \label{fig:motivation}
    \end{figure}

To find if there is any difference between the \pcauset[alt={x, y, w, z}]{4, 3, 2, 1} filter and the \pcauset[alt={4-\hbox{chain} }]{1, 2, 3, 4} filter, we perform the following experiment: taking the original image, apply one fixed filter three times, effectively resizing the image to 1/8 of the original dimension. Then, use nearest neighbors to resize the image to the original shape. In this way, we obtain two images: image $A_{\pcauset[alt={x, y, w, z}]{4, 3, 2, 1}}$, obtained from applying the \pcauset[alt={x, y, w, z}]{4, 3, 2, 1} filter three times and then upsizing, and image $B_{\pcauset[alt={4-\hbox{chain} }]{1, 2, 3, 4}}$, obtained from applying the \pcauset[alt={4-\hbox{chain} }]{1, 2, 3, 4} filter three times and then upsizing. Then, we compare the SSIM and PSNR between image $A_{\pcauset[alt={x, y, w, z}]{4, 3, 2, 1}}$ and the original image, and between image $B_{\pcauset[alt={4-\hbox{chain} }]{1, 2, 3, 4}}$ and the original image. As expected from the visual evidence, other methods may be more convenient for resizing, but we only aim to obtain an understanding of the effect of the filters.

We present the statistics obtained by applying the nearest-neighbor method three times, followed by upsizing using the same approach. In addition, we report the corresponding SSIM and PSNR values.
\begin{center}
\begin{tabular}{|c ||c |c |} 
 \hline
 & SSIM & PSNR  \\ [0.5ex] 
 \hline\hline
 Figure $A_{\pcauset[alt={x, y, w, z}]{4, 3, 2, 1}}$ and original & .1748 & 5.44 \\ 
 \hline
 Figure $B_{\pcauset[alt={4-\hbox{chain} }]{1, 2, 3, 4}}$ and original & .1756 & 5.56  \\
 \hline
 Nearest N and original &.248 & 7.41 \\
 \hline
\end{tabular}
\end{center}


\clearpage

\bibliography{sample.bib}{}
\bibliographystyle{siam}


\end{document}